\newcommand{\x}{{\mathbf x}}
\newcommand{\bE}{{\mathbb E}}
\newcommand{\loco}{{\sc Loco}\xspace}
\newcommand{\locod}{{\sc Dual-Loco}\xspace}
\newcommand{\cocoa}{{\sc CoCoA}\xspace}
\newcommand{\cocoap}{{\sc CoCoA}$^+$\xspace}
\newcommand{\rank}{r}
\newcommand{\samp}{n}
\newcommand{\dims}{p}
\newcommand{\dimset}{\mathcal{P}}
\newcommand{\minusk}{(-k)}
\newcommand{\blocks}{K}
\newcommand{\dimsk}{\tau}
\newcommand{\dimsks}{\tau_{subs}}
\newcommand{\lenlambda}{l}
\newcommand{\RP}{\boldsymbol{\Pi}}
\newcommand{\feats}{\bar{\Xt}}
\newcommand{\alphatil}{\tilde{\boldalpha}}
\newcommand{\alphaopt}{{\boldalpha}^{*}}
\newcommand{\gammatil}{\tilde{\gamma}}
\newcommand{\gammaopt}{{\gamma}^{*}}
\newcommand{\soln}{\boldbeta}
\newcommand{\solnRP}{\bar{\soln}}
\newcommand{\solnopt}{\soln^{*}}
\newcommand{\solnest}{\widehat{\soln}}
\newcommand{\tr}{^{\top}}
\newcommand{\cb}[1]{\left\{ {#1} \right\}}
\newcommand{\br}[1]{\left( {#1} \right)}
\newcommand{\sq}[1]{\left[ {#1} \right]}
\newcommand{\nrm}[1]{\Vert {#1} \Vert_2}
\newcommand{\order}[1]{O \br{ #1 }}
\newcommand{\R}{{\mathbb R}}
\newcommand{\Xt}{{\mathbf{X}}}
\renewcommand{\u}{\mathbf{u}}
\newcommand{\y}{Y}
\newcommand{\wt}{{\mathbf{w}}}
\DeclareMathOperator*{\argmin}{argmin}
\newcommand{\At}{{\mathbf{A}}}
\newcommand{\Dt}{{\mathbf{D}}}
\newcommand{\Gt}{{\mathbf{G}}}
\newcommand{\Ht}{{\mathbf{H}}}
\newcommand{\It}{{\mathbf{I}}}
\newcommand{\Kt}{{\mathbf{K}}}
\newcommand{\Mt}{{\mathbf{M}}}
\newcommand{\St}{{\mathbf{S}}}
\newcommand{\Ut}{{\mathbf{U}}}
\newcommand{\Vt}{{\mathbf{V}}}
\newcommand{\Wt}{{\mathbf{W}}}
\newcommand{\boldalpha}{\boldsymbol{\alpha}}
\newcommand{\boldbeta}{\boldsymbol{\beta}}
\newcommand{\boldSigma}{\boldsymbol{\Sigma}}
\newtheorem{thm}{Theorem}
\newtheorem{lem}[thm]{Lemma}
\newtheorem{defn}{Definition}
\renewcommand{\algorithmicrequire}{\textbf{Input:}}
\renewcommand{\algorithmicensure}{\textbf{Output:}}
\begin{document}

\twocolumn[

\aistatstitle{\locod: Distributing Statistical Estimation Using Random Projections}

\aistatsauthor{ Christina Heinze \And Brian McWilliams \And Nicolai Meinshausen}

\aistatsaddress{ Seminar for Statistics, ETH Z\"urich \And Disney Research \And Seminar for Statistics, ETH Z\"urich } ]

\begin{abstract}
We present \locod, a communication-efficient algorithm for distributed
statistical estimation. \locod assumes that the data is distributed across workers according to the features rather than the
samples. It requires only a single round of communication where
low-dimensional random projections are used to approximate the
dependencies between features available to different workers. We show that \locod has bounded approximation error which only depends weakly on the number of workers. We compare \locod against a state-of-the-art distributed optimization method on a variety of real world datasets and show that it obtains better speedups while retaining good accuracy. In particular, \locod 
allows for fast cross validation as only part of the algorithm depends on the regularization parameter.
\end{abstract}

\section{Introduction}
Many statistical estimation tasks amount to solving an optimization problem of the form
\begin{equation}
\min_{\soln\in\R^{\dims}} J(\soln):= \sum_{i=1}^\samp f_i(\soln \tr \x_i) + \frac{\lambda}{2}\nrm{\soln}^2
\label{eq:primal}
\end{equation}
where $\lambda>0$ is the regularization parameter. The loss functions $f_i(\soln\tr \x_i)$ depend on labels $y_i\in\R$ and linearly on the coefficients, $\soln$ through a vector of covariates, $\x_i\in\R^{\dims}$. Furthermore, we assume all $f_i$ to be convex and smooth with Lipschitz continuous gradients. Concretely, when $f_i(\soln \tr \x_i) = (y_i - \soln\tr\x_i)^2$, Eq.~\eqref{eq:primal} corresponds to ridge regression; for logistic regression $f_i(\soln \tr \x_i) = \log{(1 + \exp{(-y_i\soln\tr\x_i)})}$.

For large-scale problems, it is no longer practical to solve even relatively simple estimation tasks such as \eqref{eq:primal} on a single machine. To deal with this, approaches to distributed data analysis have been proposed that take advantage of many cores or computing nodes on a cluster. A common idea which links many of these methods is stochastic optimization. Typically, each of the workers only sees a small portion of the data points and performs incremental updates to a global parameter vector. It is typically assumed that the number of data points, $\samp$, is very large compared with the number of features, $\dims$, or that the data is extremely sparse. In such settings -- which are common, but not ubiquitous in large datasets -- distributed stochastic optimization algorithms perform well but may converge slowly otherwise.

A fundamentally different approach to distributing learning is for each worker to only have access to a portion of the available features. Distributing according to the features could be a preferable alternative for several reasons. 
Firstly, for {\bf high-dimensional data}, where $\dims$ is large relative to $\samp$, better scaling can be achieved. This setting is challenging, however, since most loss functions are not separable across coordinates. 
High-dimensional data is commonly encountered in the fields of bioinformatics, climate science and computer vision. Furthermore, for a variety of prediction tasks it is often beneficial to map input vectors into a higher dimensional feature space, e.g.\ using deep representation learning or considering higher-order interactions.
Secondly, {\bf privacy}. Individual blocks of features could correspond to sensitive information (such as medical records) which should be included in the predictive model but is not allowed to be communicated in an un-disguised form. 

\paragraph{Our contribution.}
In this work we introduce \locod to solve problems of the form \eqref{eq:primal} in the distributed setting when each worker only has access to a subset of the \emph{features}. \locod is an extension of the \loco algorithm \cite{mcwilliams2014loco} which was recently proposed for solving distributed ridge regression in this setting. We propose an alternative formulation where each worker instead locally solves a \emph{dual} optimization problem. \locod has a number of practical and theoretical improvements over the original algorithm:
\begin{itemize}
\item \locod is applicable to a wider variety of smooth, convex $\ell_2$ penalized loss minimization problems encompassing many widely used regression and classification loss functions, including ridge regression, logistic regression and others. 
\item In \S\ref{sec:analysis} we provide a more intuitive and tighter theoretical result which crucially does not depend on specific details of the ridge regression model and has weaker dependence on the number of workers, $\blocks$. 
\item We also show that \emph{adding} (rather than concatenating) random features allows for an efficient implementation yet retains good approximation guarantees.
\end{itemize}
In \S\ref{sec:implementation} we report experimental results with high-dimensional real world  datasets corresponding to two different problem domains: climate science and computer vision. We compare \locod with \cocoap, a recently proposed state-of-the-art algorithm for distributed dual coordinate ascent \cite{ma2015adding}. Our experiments show that \locod demonstrates better scaling with $\blocks$ than \cocoap while retaining a good approximation of the optimal solution. We provide an implementation of \locod in Apache Spark\footnote{\url{http://spark.apache.org/}}. The portability of this framework ensures that \locod is able to be run in a variety of distributed computing environments. 

\vspace{-0.15cm}
\section{Related work} \vspace{-0.1cm}
\subsection{Distributed Estimation}
Recently, several asynchronous stochastic gradient descent (SGD) methods \cite{hogwild,Duchi:2013} have been proposed for solving problems of the form \eqref{eq:primal} in a parallel fashion in a multi-core, shared-memory environment and have been extended to the distributed setting. For such methods, large speedups are possible with asynchronous updates when the data is sparse.  However, in some problem domains the data collected is dense with many correlated features. Furthermore, the $\dims\gg\samp$ setting can result in slow convergence. In the distributed setting, such methods can be impractical since the cost of communicating updates can dominate other computational considerations. 

Jaggi et al. proposed a communication-efficient distributed dual coordinate ascent algorithm (\cocoa resp.\ \cocoap) \cite{jaggi2014communication, ma2015adding}. Each worker makes multiple updates to its local dual variables before communicating the corresponding primal update. This allows for trading off communication and convergence speed. Notably they show that convergence is actually independent of the number of workers, thus \cocoap exhibits \emph{strong scaling} with $\blocks$.

Other recent work considers solving statistical estimation tasks using a single round of communication \cite{zhang:2013, liu2014distributed}. However, all of these methods consider only distributing over the rows of the data where an i.i.d. assumption on the observations holds.

On the other hand, few approaches have considered distributing across the columns (features) of the data. This is a more challenging task for both estimation and optimization since the columns are typically assumed to have arbitrary dependencies and most commonly used loss functions are not separable over the features. Recently, \loco was proposed to solve ridge regression when the data is distributed across the features  \cite{mcwilliams2014loco}. \loco requires a single round to communicate small matrices of randomly projected features which approximate the dependencies in the rest of the dataset (cf.\ Figure~\ref{fig:loco_partition}). Each worker then optimizes its own sub-problem independently and finally sends its portion of the solution vector back to the master where they are combined.  
\loco makes no assumptions about the correlation structure between features. It is therefore able to perform well in challenging settings where the features are correlated between blocks and is particularly suited when $\dims\gg \samp$. Indeed, since the relative dimensionality of local problems decreases when splitting by columns, they are easier in a statistical sense. \loco makes no assumptions about data sparsity so it is also able to obtain speedups when the data is dense. 

One-shot communication schemes are beneficial as the cost of communication consists of a fixed cost and a cost that is proportional to the size of the message. Therefore, it is generally cheaper to communicate a few large objects than many small objects. 

\subsection{Random projections for estimation and optimization}
\label{sec:rps}

\begin{figure}[!tp]
\vspace{-5pt}
\includegraphics[trim=0 15 10 10, clip, width=0.45\textwidth, keepaspectratio=true]{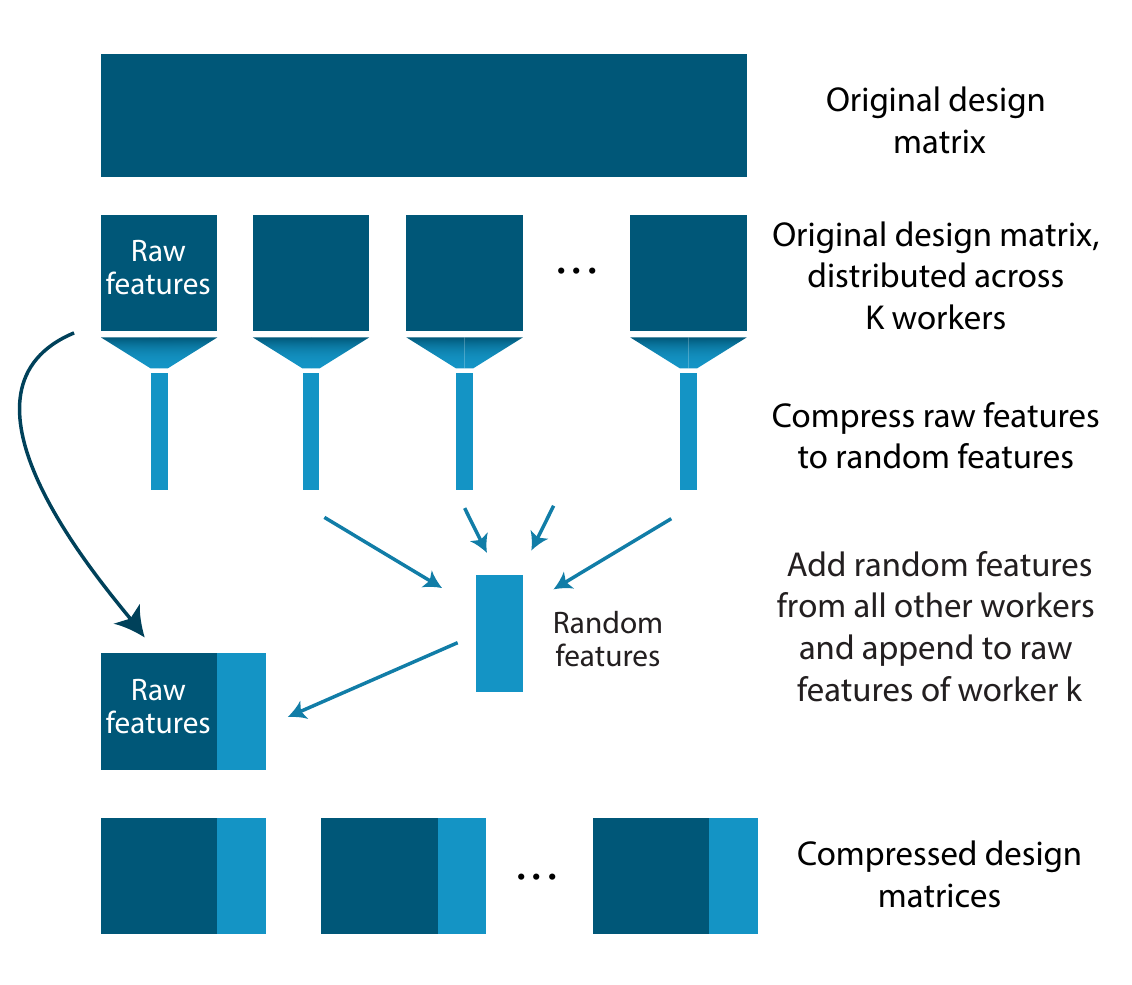}
\caption{\small Schematic for the distributed approximation of a large data set with random projections, used by \locod. \label{fig:loco_partition}}
\end{figure}

Random projections are low-dimensional embeddings $\RP :\R^\dimsk\rightarrow \R^{\dimsks}$ which approximately preserve an entire subspace of vectors.  
They have been extensively used to construct efficient algorithms when the sample-size is large in a variety of domains such as: nearest neighbours \cite{Ailon:2009}, matrix factorization \cite{Boutsidis:2012tv}, least squares \cite{Dhillon:2013wz,mcwilliams2014fast} and recently in the context of optimization \cite{pilanci2015newton}. 

We concentrate on the Subsampled Randomized Hadamard Transform (SRHT), a structured random projection \cite{Tropp:2010uo}. 
The SRHT consists of a projection matrix, $\RP= \sqrt{\dimsk/\dimsks} \Dt\Ht\St$ \cite{Boutsidis:2012tv} with the definitions:
{(\bf i)} $\St\in\R^{\dimsk\times\dimsks}$ is a subsampling matrix. 
{(\bf ii)} $\Dt\in\R^{\dimsk\times\dimsk}$ is a diagonal matrix whose entries are drawn independently
  from $\{-1, 1\}$. 
{(\bf iii)} $\Ht \in \R^{\dimsk \times \dimsk}$ is a normalized Walsh-Hadamard
  matrix. The key benefit of the SRHT is that due to its recursive definition the product between $\RP\tr$ and $\u\in\R^\dimsk$ can be computed in $\order{\dimsk\log\dimsk}$ time while never constructing $\RP$ explicitly. 

For moderately sized problems, random projections have been used to reduce the dimensionality of the data prior to performing regression \cite{kaban:2014,lu:2013}. However after projection, the solution vector is in the compressed space and so interpretability of coefficients is lost. Furthermore, the projection of the low-dimensional solution back to the original high-dimensional space is in fact guaranteed to be a \emph{bad} approximation of the optimum \cite{zhang2012recovering}.  

\paragraph{Dual Random Projections.}
Recently, \cite{zhang2012recovering,zhang2014random} studied the effect of random projections on the \emph{dual} optimization problem. 
For the primal problem in Eq.~\eqref{eq:primal}, defining $\Kt = \Xt\Xt\tr$, we have the corresponding dual
\begin{equation}
\max_{\boldalpha\in\R^\samp} -  \sum_{i=1}^\samp f_i^*(\alpha_i) - \frac{1}{2\samp\lambda} \boldalpha \tr \Kt \boldalpha  \label{eq:alphaopt}
\end{equation}
where $f^*$ is the conjugate Fenchel dual of $f$ and $\lambda > 0$. For example, for squared loss functions $f_i(u) = \frac{1}{2}(y_i-u)^2$, we have $f_i^*(\alpha) = \frac{1}{2} \alpha^2 + \alpha y_i$.
For problems of this form, the dual variables can be directly mapped to the primal variables, such that for a vector $\alphaopt$ which attains the maximum of \eqref{eq:alphaopt},  the optimal primal solution has the form
$
\solnopt(\alphaopt) = -\frac{1}{ \samp \lambda}\Xt\tr \alphaopt . 
$

Clearly, a similar dual problem to \eqref{eq:alphaopt} can be defined in the projected space. Defining $\tilde{\Kt} = (\Xt\RP)(\Xt\RP)\tr$ we have
\begin{equation}
\max_{\boldalpha\in\R^\samp} -\sum_{i=1}^\samp f_i^*(\alpha_i) - \frac{1}{2\samp\lambda} \boldalpha \tr \tilde{\Kt} \boldalpha \label{eq:alphatil} .
\end{equation}

Importantly, the vector of dual variables does not change dimension depending on whether the original problem \eqref{eq:alphaopt} or the projected problem \eqref{eq:alphatil} is being solved. 
Under mild assumptions on the loss function, by mapping the solution to this new problem, $\alphatil$, back to the original space one obtains a vector 
$
\tilde{\soln}(\alphatil) = -\frac{1}{\samp \lambda}\Xt\tr \alphatil
$
, which is a \emph{good} approximation to $\solnopt$, the solution to the original problem \eqref{eq:primal} \cite{zhang2012recovering,zhang2014random}.

\section{The \locod algorithm}

\vspace{-.5cm}
\begin{minipage}{0.475\textwidth}

\begin{algorithm}[H]
\caption{\locod  \label{alg:batch}}
	\algorithmicrequire\; Data: $\Xt$, $\y$, no. workers: $\blocks$ \\ Parameters: $\dimsks$, $\lambda$
  \begin{algorithmic}[1]
  \STATE Partition $\{p\}$ into $\blocks$ subsets of equal size $\dimsk$ and distribute feature vectors in $\Xt$ accordingly over $\blocks$ workers.
    \FOR{\textbf{each} worker $k\in\{1,\ldots\blocks\}$ \textbf{in parallel}}
        \STATE Compute and send random features $ \Xt_k \RP_k$.
        \STATE Receive random features and construct $\feats_k$. 
        \STATE $\alphatil_k \leftarrow \texttt{LocalDualSolver}(\feats_k,\y,\lambda) $
        \STATE $\solnest_k = -\frac{1}{\samp \lambda}\Xt_k\tr\alphatil_k$
        \STATE Send $ \solnest_k $ to driver.
    \ENDFOR
	  \end{algorithmic}
  \algorithmicensure\; Solution vector: $\solnest = \sq{\solnest_1,\ldots,\solnest_\blocks}$
\end{algorithm} 
\end{minipage}

In this section we detail the \locod algorithm. \locod differs from the original \loco algorithm in two important ways. {\bf (i)} The random features from each worker are summed, rather than concatenated, to obtain a $\dimsks$ dimensional approximation allowing for an efficient implementation in a large-scale distributed environment.  
{\bf (ii)} Each worker solves a local \emph{dual} problem similar to \eqref{eq:alphatil}. This allows us to extend the theoretical guarantees to a larger class of estimation problems beyond ridge regression (\S\ref{sec:analysis}). 

We consider the case where $\dims$ features are distributed across $K$ different workers in non-overlapping subsets $\dimset_{1},\ldots,\dimset_{\blocks}$ of equal size\footnote{This is for simplicity of notation only, in general the partitions can be of different sizes.}, $\dimsk=\dims/\blocks$.  

Since most loss functions of interest are not separable across coordinates, a key challenge addressed by \locod is to define a local minimization problem for each worker to solve \emph{independently} and \emph{asynchronously} while still maintaining important dependencies between features in different blocks and keeping communication overhead low. Algorithm \ref{alg:batch} details \locod in full.

We can rewrite \eqref{eq:primal} making explicit the contribution from block $k$. 
Letting $\Xt_{k}\in\R^{\samp\times \dimsk}$ be the sub-matrix whose columns correspond to the coordinates in $\dimset_{k}$ (the ``raw'' features of block $k$) and $\Xt_{\minusk}\in\R^{\samp\times (\dims-\dimsk)}$ be the remaining columns of $\Xt$, we have
\begin{align} \label{eq:optim_global}
J(\soln) = &  \sum_{i=1}^\samp f_i\br{\x_{i,k} \tr \soln_{\text{raw}} + \x_{ i, \minusk} \tr \soln_{\minusk}} 
\nonumber\\
& 
+ \lambda  \big(\nrm{ \soln_{\text{raw}}}^2 +  \nrm{\soln_{\minusk}}^2\big).
 \end{align}
 Where $\x_{i,k}$ and $\x_{i,(-k)}$ are the rows of $\Xt_{k}$ and $\Xt_{\minusk}$ respectively.
We replace $\Xt_{\minusk}$ in each block with a low-dimensional randomized approximation which preserves its contribution to the loss function. This procedure is described in Figure \ref{fig:loco_partition}.

In {\bf Step 5}, these matrices of random features are communicated and worker $k$ constructs the matrix
\begin{equation}
\feats_k\in\R^{\samp\times(\dimsk+ \dimsks)} = \sq{\Xt_k, \sum_{k'\neq k}  \Xt_k \RP_k},
\label{eq:concat}
\end{equation}
which is the concatenation of worker $k$'s raw features and the \emph{sum} of the random features from all other workers. $\RP$ is the SRHT matrix introduced in \S\ref{sec:rps}. 

As we prove in Lemma~\ref{lem:sumRF}, summing $\R^\dimsk\rightarrow\R^{\dimsks}$-dimensional random projections from $(\blocks-1)$ blocks is equivalent to computing the $\R^{(\dims - \dimsk)}\rightarrow\R^{\dimsks}$-dimensional random projection in one go. The latter operation is impractical for very large $\dims$ and not applicable when the features are distributed. Therefore, summing the random features from each worker allows the dimensionality reduction to be distributed across workers.
Additionally, the summed random feature representation can be computed and combined very efficiently. 
We elaborate on this aspect in \S\ref{sec:implementation}.

For a single worker the local, approximate primal problem is then
\begin{equation} \label{eq:local_primal}
\min_{\solnRP\in\R^{\dimsk+\dimsks}} J_k(\solnRP) := \sum_{i=1}^\samp f_i(\solnRP \tr \bar{\x}_i) + \frac{\lambda}{2}\nrm{\solnRP}^2
\end{equation}
where $\bar{\x}_i \in \R^{\dimsk + \dimsks}$ is the $i^{th}$ row of $\feats_k$. The corresponding dual problem for each worker in the \locod algorithm is
\begin{equation}\label{eq:localdual}
\max_{\boldalpha\in\R^{\samp}} - \sum_{i=1}^\samp f^{*}_i(\alpha_i) - \frac{1}{2\samp\lambda}  \boldalpha\tr \tilde{\Kt}_k \boldalpha, \quad \tilde{\Kt}_k = \feats_k\feats_k\tr .
\end{equation}

The following steps in Algorithm \ref{alg:batch} detail respectively how the solution to \eqref{eq:localdual} and the final \locod estimates are obtained.

\paragraph{Step 6. \texttt{LocalDualSolver}.} The \texttt{LocalDualSolver} computes the solution for \eqref{eq:localdual}, the local dual problem.
The solver can be chosen to best suit the problem at hand. This will depend on the absolute size of $\samp$ and $\dimsk + \dimsks$ as well as on their ratio. For example, we could use SDCA \cite{shalev2013stochastic} or Algorithm 1 from \cite{zhang2012recovering}.

\paragraph{Step 7. Obtaining the global primal solution.} Each worker maps its local dual solution to the primal solution corresponding only to the coordinates in $\dimset_k$. In this way, each worker returns coefficients corresponding only to its own raw features. The final primal solution vector is obtained by concatenating the $K$ local solutions. Unlike \loco, we no longer require to discard the coefficients corresponding to the random features for each worker. Consequently, computing estimates is more efficient (especially when $\dims \gg \samp$).

\vspace{-0.25cm}
\section{\locod Approximation Error} \label{sec:analysis}
\vspace{-0.15cm}
In this section we bound the recovery error between the \locod solution and the solution to Eq.~\eqref{eq:primal}. 

\begin{thm}[\locod error bound] \label{thm:dual_loco}
Consider a matrix $\Xt\in\R^{\samp\times\dims}$ with rank, $\rank$. Assume that the loss $f(\cdot)$ is smooth and Lipschitz continuous. For a subsampling dimension $\dimsks \geq c_1 \dims \blocks $ where $0\leq c_1 \leq 1/\blocks^2$, let $\solnopt$ be the solution to \eqref{eq:primal} and $\solnest$ be the estimate returned by Algorithm \ref{alg:batch}. We have with probability at least $1-\blocks \br{\delta + \frac{\dims-\dimsk}{e^\rank}}$ 
\begin{eqnarray}
\nrm{\solnest - \solnopt} &\leq& \frac{  \varepsilon  }{1-\varepsilon} \nrm{\solnopt} ~, \label{eq:final_bound}\\
&\text{where}& \quad \varepsilon = \sqrt{\frac{c_0 \log(2\rank/\delta) \rank}{c_1 \dims}} < 1. \nonumber
\end{eqnarray}
\end{thm}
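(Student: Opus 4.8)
The plan is to reduce the $\blocks$-worker estimate to $\blocks$ independent single-random-projection problems, control each with the dual random projection recovery bound of \cite{zhang2012recovering,zhang2014random}, and aggregate the per-block errors. Fix a worker $k$. By Lemma~\ref{lem:sumRF} the summed random features $\sum_{k'\neq k}\Xt_{k'}\RP_{k'}$ may be treated as a single SRHT image $\Xt_{\minusk}\RP_{\minusk}$, $\RP_{\minusk}\colon\R^{\dims-\dimsk}\to\R^{\dimsks}$, with the usual subspace-embedding guarantees. Permuting coordinates so that block $k$ comes first, this means $\feats_k = \Xt\bar\RP_k$ with the block matrix $\bar\RP_k = \diag\br{\Id_{\dimsk},\RP_{\minusk}}\in\R^{\dims\times(\dimsk+\dimsks)}$, so $\tilde\Kt_k = \feats_k\feats_k\tr = \Xt\bar\RP_k\bar\RP_k\tr\Xt\tr$; worker $k$'s local dual \eqref{eq:localdual} is therefore \emph{exactly} the dual-random-projection problem \eqref{eq:alphatil} for the global problem \eqref{eq:primal} with projection $\bar\RP_k$. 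The structural point is that the vector $\solnest_k = -\tfrac{1}{\samp\lambda}\Xt_k\tr\alphatil_k$ returned in Step~6 equals the sub-vector indexed by $\dimset_k$ of the recovered vector $\solnrand_k := -\tfrac{1}{\samp\lambda}\Xt\tr\alphatil_k\in\R^{\dims}$ (recovery in the sense of \cite{zhang2012recovering}, using the \emph{original} $\Xt$).

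Applying the recovery bound --- valid since the $\ell_2$ penalty makes both problems strongly convex, $f$ is smooth and Lipschitz, and $\alphatil_k$ exactly maximises \eqref{eq:localdual} --- gives $\nrm{\solnrand_k - \solnopt}\le\tfrac{\varepsilon_k}{1-\varepsilon_k}\nrm{\solnopt}$ whenever $\varepsilon_k<1$, where $\varepsilon_k = \nrm{\Id_\rank - \Ut\tr\bar\RP_k\bar\RP_k\tr\Ut}$ and $\Ut\in\R^{\dims\times\rank}$ is an orthonormal basis of the row space of $\Xt$ (so $\Xt\Ut\Ut\tr=\Xt$); this constant sees $\bar\RP_k$ only through the action of $\bar\RP_k\bar\RP_k\tr$ on $\mathrm{row}(\Xt)$, so the partly-deterministic block form of $\bar\RP_k$ is admissible. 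Since $\bar\RP_k$ is the identity on the block-$k$ coordinates, splitting $\Ut=[\Ut_k;\Ut_{\minusk}]$ into its block-$k$ and block-$\minusk$ rows and using $\Ut\tr\Ut=\Id_\rank$ gives $\Id_\rank - \Ut\tr\bar\RP_k\bar\RP_k\tr\Ut = \Ut_{\minusk}\tr\br{\Id - \RP_{\minusk}\RP_{\minusk}\tr}\Ut_{\minusk}$; because $\mathrm{col}(\Ut_{\minusk})\subseteq\mathrm{row}(\Xt_{\minusk})$, of dimension at most $\rank$, and $\nrm{\Ut_{\minusk}}\le1$, $\varepsilon_k$ is at most the subspace-embedding distortion of $\RP_{\minusk}$ on $\mathrm{row}(\Xt_{\minusk})$. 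Standard SRHT bounds \cite{Tropp:2010uo} then give $\varepsilon_k \le \varepsilon' := \sqrt{c_0\log(2\rank/\delta)\rank/\dimsks}$ with probability at least $1-\delta-(\dims-\dimsk)/e^{\rank}$, the second term being the failure probability of the incoherence estimate for the rows of the Hadamard-transformed basis.

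A union bound over the $\blocks$ workers then gives $\varepsilon_k\le\varepsilon'$ for all $k$ with probability at least $1-\blocks\br{\delta+(\dims-\dimsk)/e^{\rank}}$. On this event, using $\solnest=\sq{\solnest_1,\dots,\solnest_\blocks}$ and $\solnest_k=(\solnrand_k)_{\dimset_k}$,
\[
\nrm{\solnest-\solnopt}^2 = \sum_{k=1}^{\blocks}\nrm{(\solnrand_k)_{\dimset_k}-(\solnopt)_{\dimset_k}}^2 \le \sum_{k=1}^{\blocks}\nrm{\solnrand_k-\solnopt}^2 \le \blocks\br{\tfrac{\varepsilon'}{1-\varepsilon'}}^2\nrm{\solnopt}^2 .
\]
Put $\varepsilon := \sqrt{\blocks}\,\varepsilon' = \sqrt{c_0\log(2\rank/\delta)\rank\blocks/\dimsks}$. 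By the hypothesis $\dimsks\ge c_1\dims\blocks$ we get $\varepsilon\le\sqrt{c_0\log(2\rank/\delta)\rank/(c_1\dims)}<1$, and since $1-\varepsilon'\ge1-\varepsilon>0$ it follows that $\blocks\br{\varepsilon'/(1-\varepsilon')}^2 \le \br{\varepsilon/(1-\varepsilon)}^2$; taking square roots gives \eqref{eq:final_bound}. (The condition $c_1\le 1/\blocks^2$ is needed only to ensure $\dimsks\le\dims-\dimsk$, i.e.\ that $\RP_{\minusk}$ genuinely reduces dimension.)

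I expect the main obstacle to be the second step: invoking the dual random projection recovery bound in precisely the form needed. One must check that its constant is governed solely by the action of $\bar\RP_k\bar\RP_k\tr$ on $\mathrm{row}(\Xt)$ (so the block-diagonal, partly-deterministic projection is allowed), that what is required is exact optimality of the \emph{local} dual \eqref{eq:localdual}, and that the smoothness/Lipschitz assumptions on $f$ match. The remaining pieces --- the reduction through Lemma~\ref{lem:sumRF}, the SRHT subspace-embedding probability bookkeeping leading to $\delta+(\dims-\dimsk)/e^{\rank}$, and the step absorbing the $\sqrt{\blocks}$ factor into $\varepsilon$ via $\dimsks\ge c_1\dims\blocks$ --- are routine.
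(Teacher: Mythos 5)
Your proposal is correct and follows essentially the same route as the paper: a per-worker dual random projection recovery bound (the paper's Lemma~\ref{thm:lowrank}, adapted from \cite{zhang2014random}) combined with Lemma~\ref{lem:sumRF}, a union bound over the $\blocks$ workers, $\ell_2$ aggregation of the block errors, and absorption of the resulting $\sqrt{\blocks}$ factor via $\dimsks \geq c_1 \dims \blocks$ and the inequality $\sqrt{\blocks}\,\frac{\varepsilon/\sqrt{\blocks}}{1-\varepsilon/\sqrt{\blocks}} \leq \frac{\varepsilon}{1-\varepsilon}$. The one step you flag as needing verification --- that the recovery bound of \cite{zhang2012recovering,zhang2014random} applies with the partly-deterministic block projection and depends only on $\nrm{\It - \Vt\tr\Theta_S\Theta_S\tr\Vt}$ --- is exactly what the paper's Lemma~\ref{thm:lowrank} establishes by reworking the argument of \cite{zhang2014random} with $\Theta_S$ in place of a plain random projection.
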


\begin{proof}
By Lemma \ref{thm:lowrank} and applying a union bound we can decompose the global optimization error in terms of the error due to each worker as
$
\nrm{\solnopt - \solnest} = \sqrt{\sum_{k=1}^K \nrm{\solnopt_k - \solnest_k}^2}
 \leq \sqrt{\blocks} \frac{\rho}{1-\rho} \nrm{\solnopt}, 
$
which holds with probability $1-\blocks \br{\delta + \frac{\dims-\dimsk}{e^\rank}}$. The final bound, \eqref{eq:final_bound} follows by setting  $\rho = \sqrt{\frac{c_0 \log(2\rank/\delta) \rank}{\dimsks}}$ and $\dimsks \geq c_1 \dims \blocks $ and noting that $\sqrt{\blocks}\cdot \frac{ \frac{\varepsilon}{\sqrt{\blocks}}} {1- \frac{\varepsilon}{\sqrt{\blocks}}} \leq \frac{\varepsilon}{1-\varepsilon}$. 
\end{proof}

Theorem \ref{thm:dual_loco} guarantees that the solution to \locod will be close to the optimal solution obtained by a single worker with access to all of the data. Our result relies on the data having rank $\rank \ll \dims$. In practice, this assumption is often fulfilled, in particular when the data is high dimensional. 
For a large enough projection dimension, the bound has only a weak dependence on $\blocks$ through the union bound used to determine $\xi$. The error is then mainly determined by the ratio between the rank and the random projection dimension. When the rank of $\Xt$ increases for a fixed $\dims$, we need a larger projection dimension to accurately capture its spectrum. On the other hand, the failure probability increases with $\dims$ and decreases with $\rank$. However, this countering effect is negligible as typically $\log{(\dims - \dimsk)} \ll \rank$.

\vspace{-0.25cm}
\section{Implementation and Experiments} \label{sec:implementation}
\vspace{-0.15cm}
In this section we report on the empirical performance of \locod in two sets of experiments. The first demonstrates the performance of \locod in a large, distributed classification task. The second is an application of $\ell_2$ penalized regression to a problem in climate science where accurate recovery of the coefficient estimates is of primary interest.

{\bf Cross validation.} 
In most practical cases, the regularization parameter $\lambda$ is unknown and has to be determined via $v$-fold cross validation (CV).
The chosen algorithm is usually run entirely once for each fold and each of $\lenlambda$ values of $\lambda$, leading to a runtime that is approximately $v \cdot \lenlambda$ as large as the runtime of a single run\footnote{``Approximately'' since the cross validation procedure also requires time for testing. For a single run we only count the time it takes to estimate the parameters.}. 
In this context, \locod has the advantage that steps 3 and 4 in Algorithm \ref{alg:batch} are independent of $\lambda$. Therefore, these steps only need to be performed \textit{once per fold}. In step 5, we then estimate $\alphatil_k$ for each value in the provided sequence for $\lambda$. Thus, the runtime of \locod will increase by much less than $v \cdot \lenlambda$ compared to the runtime of a single run. The performance of each value for $\lambda$ is then not only averaged over the random split of the training data set into $v$ parts but also over the randomness introduced by the random projections which are computed and communicated once per fold. The procedure is provided in full detail in Algorithm~\ref{alg:cv} in Appendix~\ref{sec:supp_exp}.

{\bf Implementation details.} 
We implemented \locod using the Apache Spark framework\footnote{A software package will be made available under the Apache license.}. Spark is increasingly gaining traction in the research community as well as in industry due to its easy-to-use high-level API and the benefits of in-memory processing. Spark is up to 100$\times$ faster than Hadoop MapReduce.
Additionally, Spark can be used in many different large-scale computing environments and the various, easily-integrated libraries for a diverse set of tasks greatly facilitate the development of applications.  

\begin{figure}[!tp]
\center
\vspace{-7.5pt}
\subfloat[\xspace]{
    \includegraphics[width=0.125\textwidth, keepaspectratio=true]{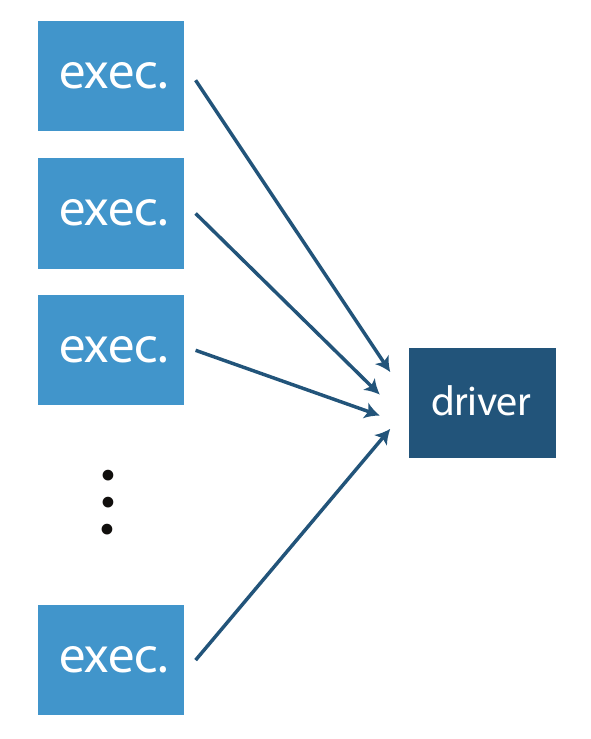}
    \label{fig:notreeagg}
}
\hspace{1cm}
\subfloat[\xspace]{
    \includegraphics[width=0.205\textwidth, keepaspectratio=true]{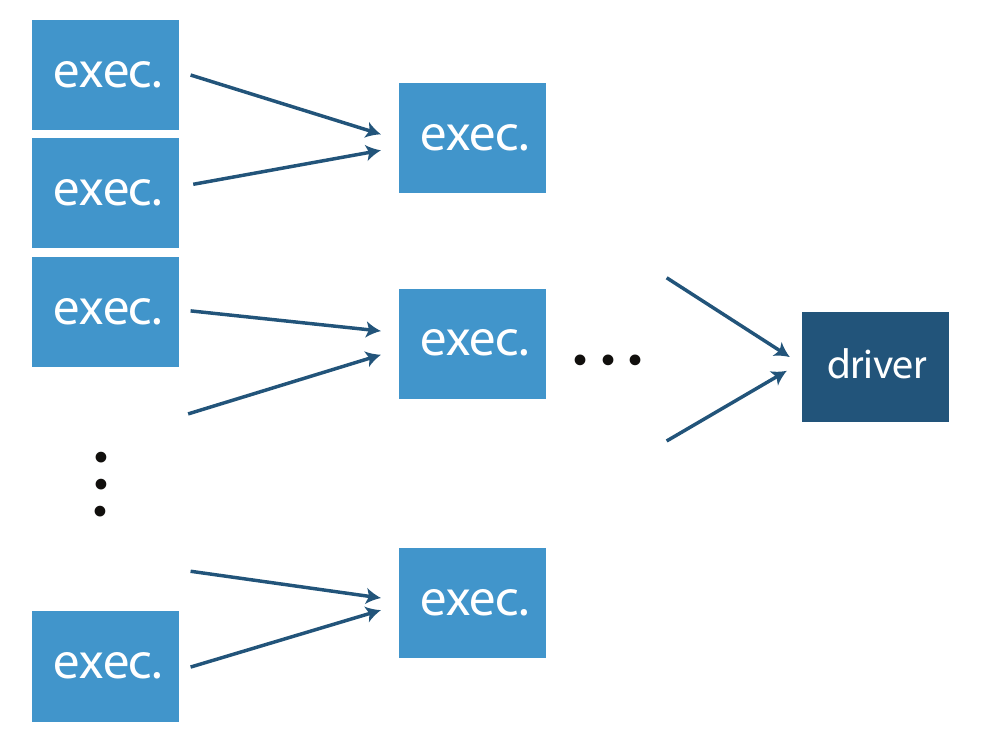}
    \label{fig:treeagg}
}
\vspace{-7.5pt}
\caption{{\small Schematic for the aggregation of the random features in Spark. \protect\subref{fig:notreeagg} When concatenating the random features naively, every worker node (exec.) sends its random features to the driver from where they are broadcasted to all workers. \protect\subref{fig:treeagg} Using the \texttt{treeReduce} scheme we can reduce the load on the driver by summing the random features from each worker node as this operation is associative and commutative. Worker $k$ is only required to subtract its own random features locally.} \label{fig:aggreg_comparison}}
\vspace{-0.25cm}
\end{figure}

When communicating and summing the random features in Spark, \locod leverages the \texttt{treeReduce} scheme as illustrated in Figure~\ref{fig:aggreg_comparison}\subref{fig:treeagg}. 
Summing has the advantage that increasing the number of workers simply introduces more layers in the tree structure (Figure~\ref{fig:treeagg}) while the load on the driver remains constant and the aggregation operation also benefits from a parallel execution. Thus, when increasing $\blocks$
 only relatively little additional communication cost is introduced which leads to speedups as demonstrated below. 
 
 In practice, we used the discrete cosine transform (DCT) provided in the FFT library jTransforms\footnote{\url{https://sites.google.com/site/piotrwendykier/software/jtransforms}}\footnote{For the Hadamard transform, $\dimsk$ must be a power of two. For the DCT there is no restriction on $\dimsk$ and very similar theoretical guarantees hold.} and we ran \locod as well as \cocoap on a high-performance cluster\footnote{\cocoap is also implemented in Spark with code available from \url{https://github.com/gingsmith/cocoa}.}.

{\bf Competing methods.} 
For the classification example, the loss function is the hinge loss. Although the problem is non-smooth, and therefore not covered by our theory, we still obtain good results suggesting that Theorem \ref{thm:dual_loco} can be generalized to non-smooth losses. Alternatively, for classification the smoothed hinge or logistic losses could be used. For the regression problem we use the squared error loss and modify \cocoap accordingly. As the \texttt{LocalDualSolver} we use SDCA \cite{shalev2013stochastic}.

We also compared \locod against the reference implementation of distributed loss minimization in the MLlib library in Spark using SGD and L-BFGS. However, after extensive cross-validation over regularization strength (and step size and mini-batch size in case of SGD), we observed that the variance was still very large and so we omit the MLlib implementations from the figures. A comparison between \cocoa and variants of SGD and mini-batch SDCA can be found in \cite{jaggi2014communication}.

{\bf Kaggle Dogs vs Cats dataset.} This is a binary classification task consisting of $25,000$ images of dogs and cats\footnote{\url{https://www.kaggle.com/c/dogs-vs-cats}}. We resize all images to $430\times 430$ pixels and use {\sc Overfeat} \cite{sermanet-iclr-14} -- a pre-trained convolutional neural network -- to extract  $\dims = 200,704$ \textit{fully dense} feature vectors from the $19^{th}$ layer of the network for each image. We train on $\samp_{train}=20,000$ images and test on the remaining $\samp_{test} = 5,000$. The size of the training data is $37$GB with over 4 billion non-zero elements. All results we report in the following are averaged over five repetitions and by ``runtime'' we refer to wall clock time.

Figure~\ref{fig:error} shows the median normalized training and test prediction MSE of \locod and \cocoap for different numbers of workers\footnote{In practice, this choice will depend on the available resources in addition to the size of the data set.}. For \locod, we also vary the size of the random feature representation and choose $\dimsks = \lbrace 0.005, 0.01, 0.02 \rbrace \times (\dims - \dimsk)$. The corresponding errors are labeled with \locod $0.5$, \locod $1$ and \locod $2$.
Note that combinations of $\blocks$ and $\dimsks$ that would result in $\dimsk <\dimsks$ cannot be used, e.g.\ $\blocks = 192$ and $\dimsks = 0.01 \times (\dims - \dimsk)$.
We ran \cocoap until a duality gap of $10^{-2}$ was attained so that the number of iterations varies for different numbers of workers\footnote{For $\blocks$ ranging from $12$ to $192$, the number of iterations needed were $77, 207, 4338, 1966$, resp.\ $3199$.\label{footnote:cocoap_its}}. Notably, for $\blocks = 48$ more iterations were needed than in the other cases which is reflected in the very low training error in this case. The fraction of local points to be processed per round was set to 10\%.
We determined the regularization parameter $\lambda$ via 5-fold cross validation.

While the differences in training errors between \locod and \cocoap are notable, the differences between the test errors are minor as long as the random feature representation is large enough. Choosing $\dimsks$ to be only $0.5$\% of $\dims - \dimsk$ seems to be slightly too small for this data set. When setting $\dimsks$ to be $1$\% of $\dims - \dimsk$ the largest difference between the test errors of \locod and \cocoap is 0.9\%.  The averaged mean squared prediction errors and their standard deviations are collected in Table~\ref{tab:mse_numbers} in Appendix~\ref{sec:supp_exp}. 

\begin{figure}[!tp]
\begin{centering}
\vspace{-7.5pt}
    \includegraphics[trim=0 16 0 0, clip, width=0.3125\textwidth, keepaspectratio=true]{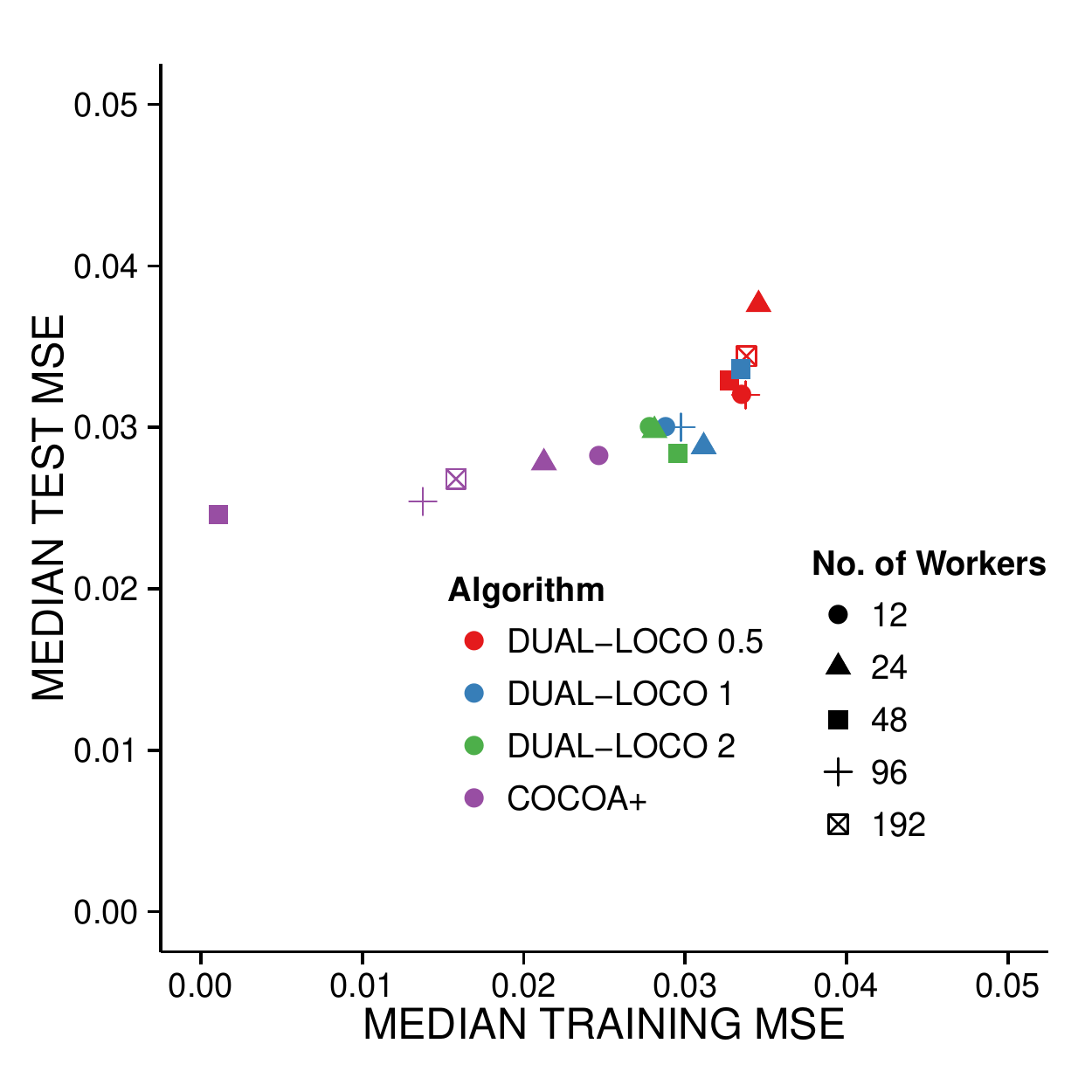}
    \caption{\small Dogs vs Cats data: Median normalized training and test prediction MSE based on 5 repetitions.\label{fig:error}}
\end{centering}
\end{figure}

Next, we would like to compare the wall clock time needed to find the regularization parameter $\lambda$ via 5-fold cross validation. For \cocoap, using the number of iterations needed to attain a duality gap of $10^{-2}$ would lead to runtimes of more than 24 hours for $\blocks \in \{48, 96, 192\}$ when comparing $\lenlambda=20$ possible values for $\lambda$. One might argue that using a duality gap of $10^{-1}$ is sufficient for the cross validation runs which would speed up the model selection procedure significantly as much fewer iterations would be required. Therefore, for $\blocks \geq 48$ we use a duality gap of $10^{-1}$ during cross validation and a duality gap of $10^{-2}$ for learning the parameters, once $\lambda$ has been determined.
 Figure~\ref{fig:totalTime20} shows the runtimes when $\lenlambda=20$ possible values for $\lambda$ are compared; Figure~\ref{fig:cv50}\subref{fig:totalTime50} compares the runtimes when cross validation is performed over $\lenlambda=50$ values. The absolute runtime of \cocoap for a single run is smaller for $\blocks = 12$ and $\blocks = 24$ and larger for $\blocks \in \{48, 96, 192\}$, so using more workers increased the amount of wall clock time necessary for job completion. 
The total runtime, including cross validation and a single run to learn the parameters with the determined value for $\lambda$, is always smaller for \locod, except when $\blocks=12$ and $\lenlambda=20$.

\begin{figure}[!tp]
\begin{centering}
\vspace{-7.5pt}
    \includegraphics[trim=0 8 0 0, clip, width=0.5\textwidth, keepaspectratio=true]{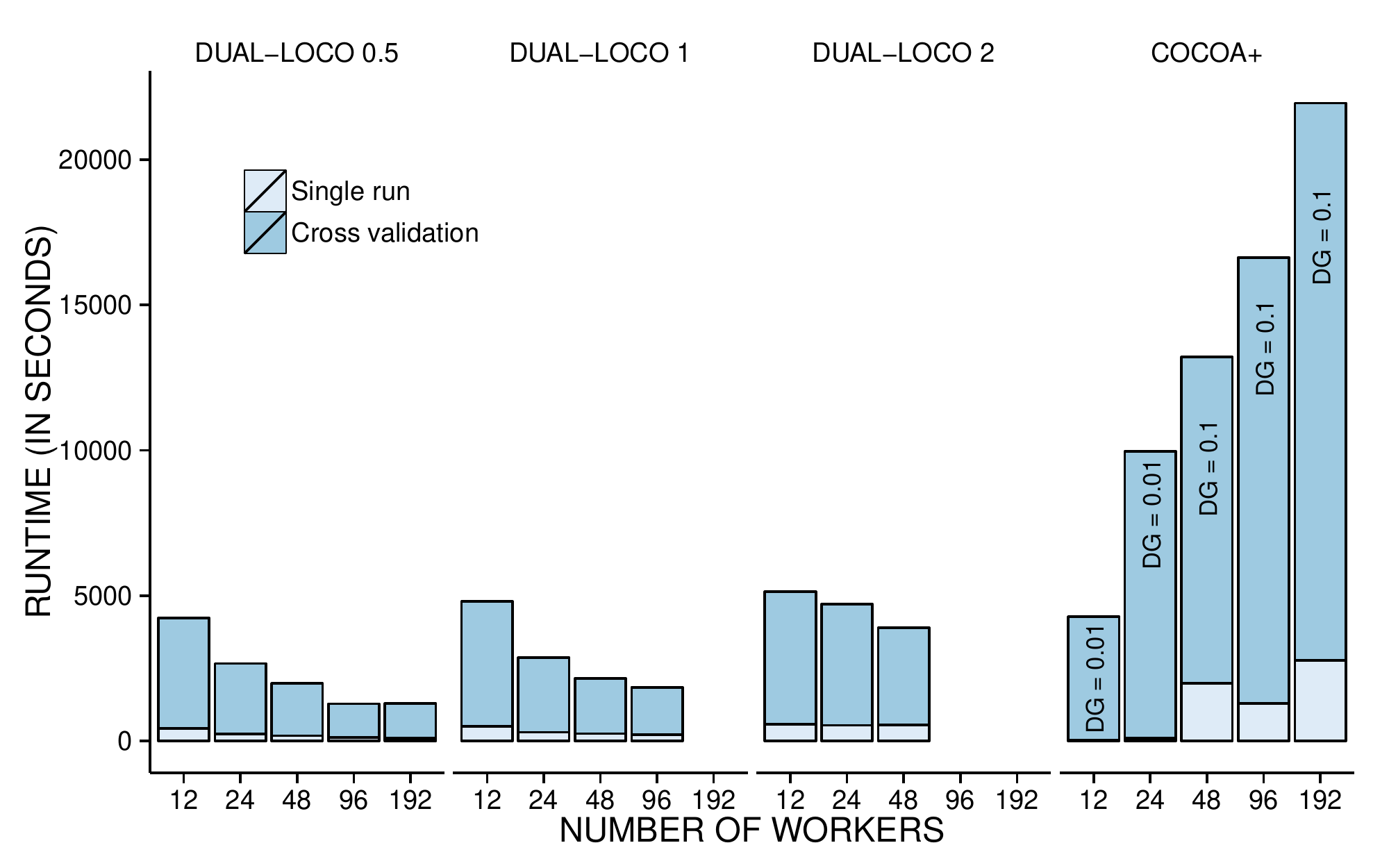}
\caption{\small Total wall clock time including 5-fold CV over $\lenlambda=20$ values for $\lambda$. For \cocoap, we use a duality gap (DG) of $10^{-1}$ for the CV runs when $K \geq 48$. \label{fig:totalTime20}}
\end{centering}
\vspace{-10.5pt}
\end{figure}

Figures~\ref{fig:speedup} and \ref{fig:cv50}\subref{fig:speedup_50} show the relative speedup of \locod and \cocoap when increasing $\blocks$. The speedup is computed by dividing the runtime for $\blocks = 12$ by the runtime achieved for the corresponding $\blocks=\{24,48,96,192\}$. A speedup value smaller than 1 implies an \emph{increase} in runtime. When considering a single run, we run \cocoap in two different settings: {\bf (i)} We use the number of iterations that are needed to obtain a duality gap of $10^{-2}$ which varies for different number of workers\textsuperscript{\ref{footnote:cocoap_its}}. Here, the speedup is smaller than 1 for all $\blocks$.  {\bf (ii)} We fix the number of outer iterations to a constant number. As $\blocks$ increases, the number of inner iterations decreases, making it easier for \cocoap to achieve a speedup. We found that although \cocoap attains a speedup of $1.17$ when increasing $\blocks$ from $12$ to $48$ (equivalent to a decrease in runtime of $14$\%), \cocoap suffers a $24$\% increase in runtime when increasing $\blocks$ from $12$ to $192$. 
  
   For \locod $0.5$ and \locod $1$ we observe significant speedups as $\blocks$ increases. As we split the design matrix by features the number of observations $\samp$ remains constant for different number of workers. At the same time, the dimensionality of each worker's local problem decreases with $\blocks$. 
   Together with the efficient aggregation of the random features, this leads to shorter runtimes. In case of \locod 2, the communication costs dominate the costs of computing the random projection and of the \texttt{LocalDualSolver}, resulting in much smaller speedups. 

Although \cocoap was demonstrated to obtain speedups for low-dimensional data sets \cite{ma2015adding} it is plausible that the same performance cannot be expected on a very high-dimensional data set. This illustrates that in such a high-dimensional setting splitting the design matrix according to the columns instead of the rows is more suitable.

\begin{figure}[!tp]
\begin{centering}
\vspace{-7.5pt}
\hspace{-10mm}
\subfloat[\xspace]{
    \includegraphics[trim=5 10 5 0, clip, width=0.25\textwidth, keepaspectratio=true]{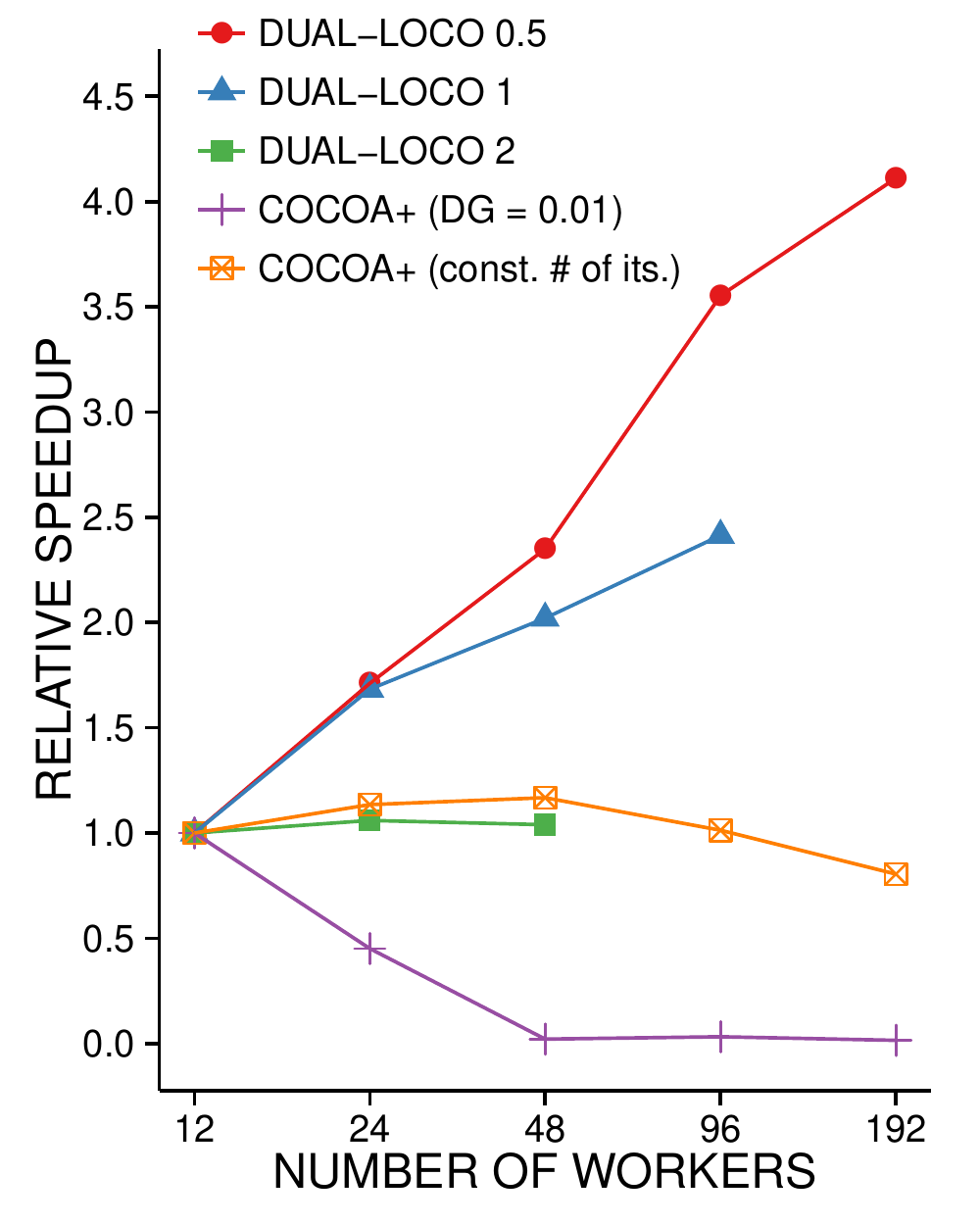}
    \label{fig:speedup_single}
}
\subfloat[\xspace]{
    \includegraphics[trim=5 10 5 0, clip, width=0.25\textwidth, keepaspectratio=true]{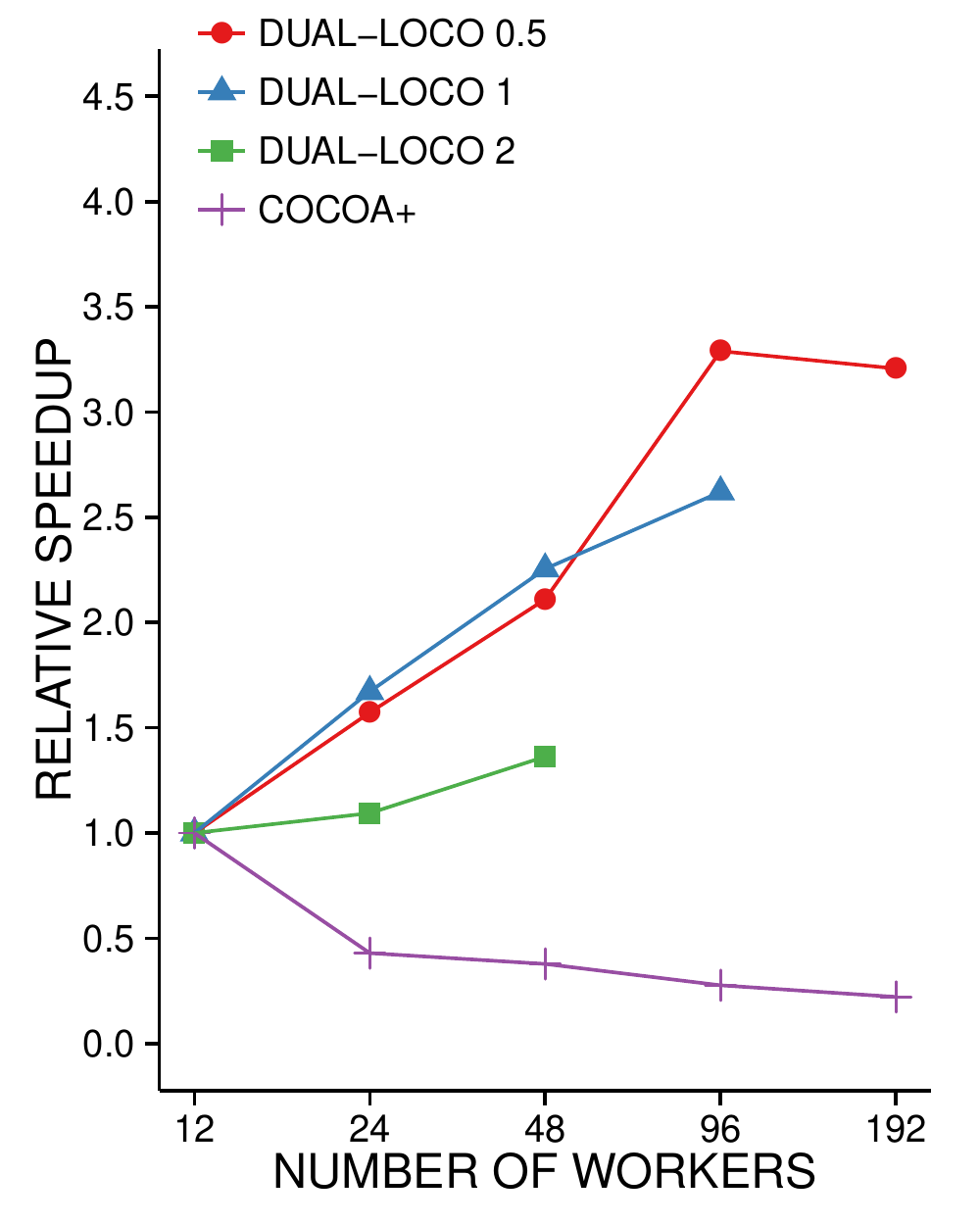}
    \label{fig:speedup_20}
}
\caption{\small Relative speedup for \protect\subref{fig:speedup_single} a single run and \protect\subref{fig:speedup_20} 5-fold CV over $\lenlambda = 20$ values for $\lambda$. \label{fig:speedup}}
\end{centering}
\vspace{-7.5pt}
\end{figure}

\begin{figure}[!tp]
\begin{centering}
\vspace{-7.5pt}
\subfloat[\xspace]{
\hspace{-10mm}
    \includegraphics[trim=0 10 5 15, clip, width=0.28125\textwidth, keepaspectratio=true]{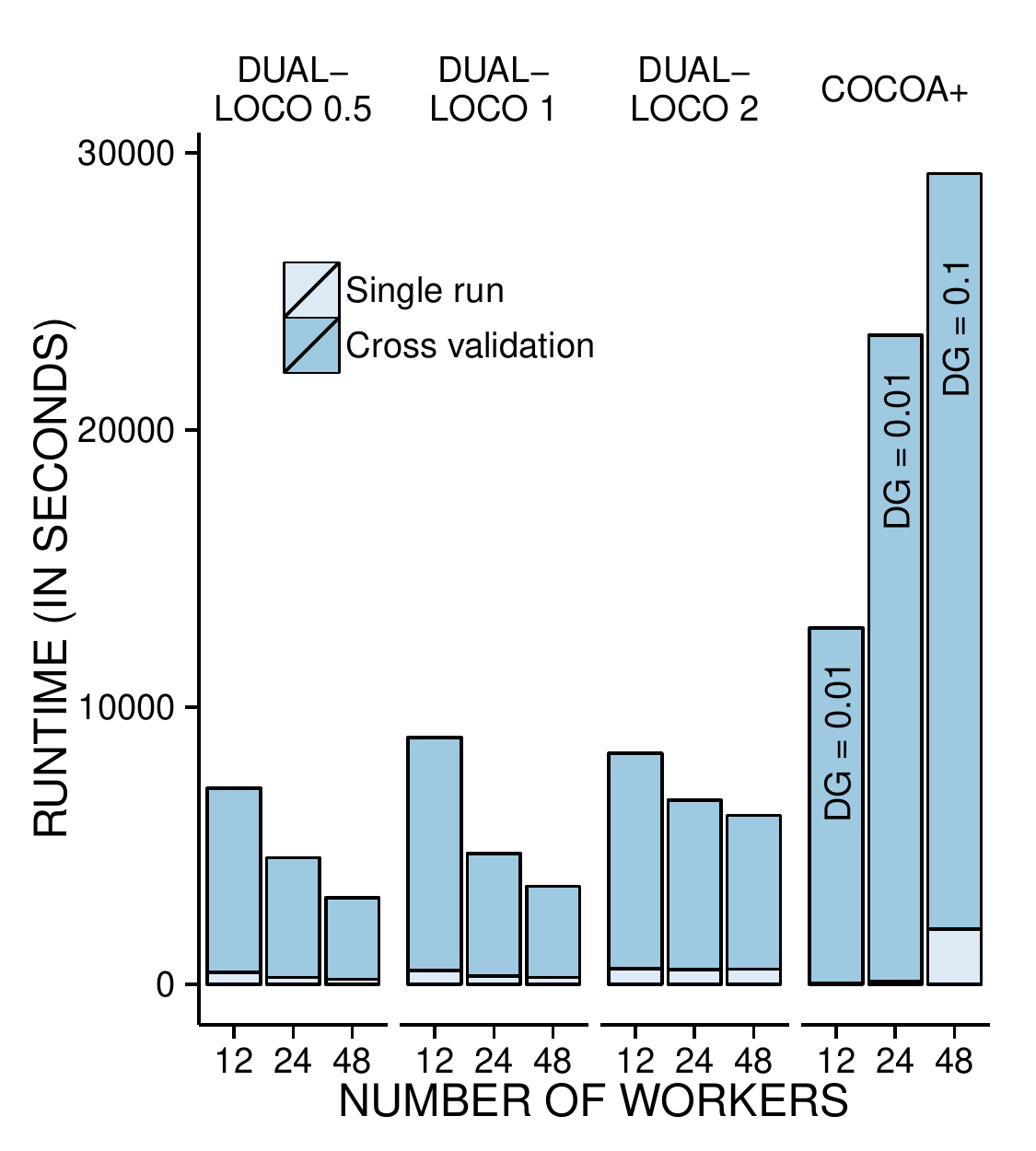}
    \label{fig:totalTime50}
}
\subfloat[\xspace]{
    \includegraphics[trim=5 10 5 0, clip, width=0.25\textwidth, keepaspectratio=true]{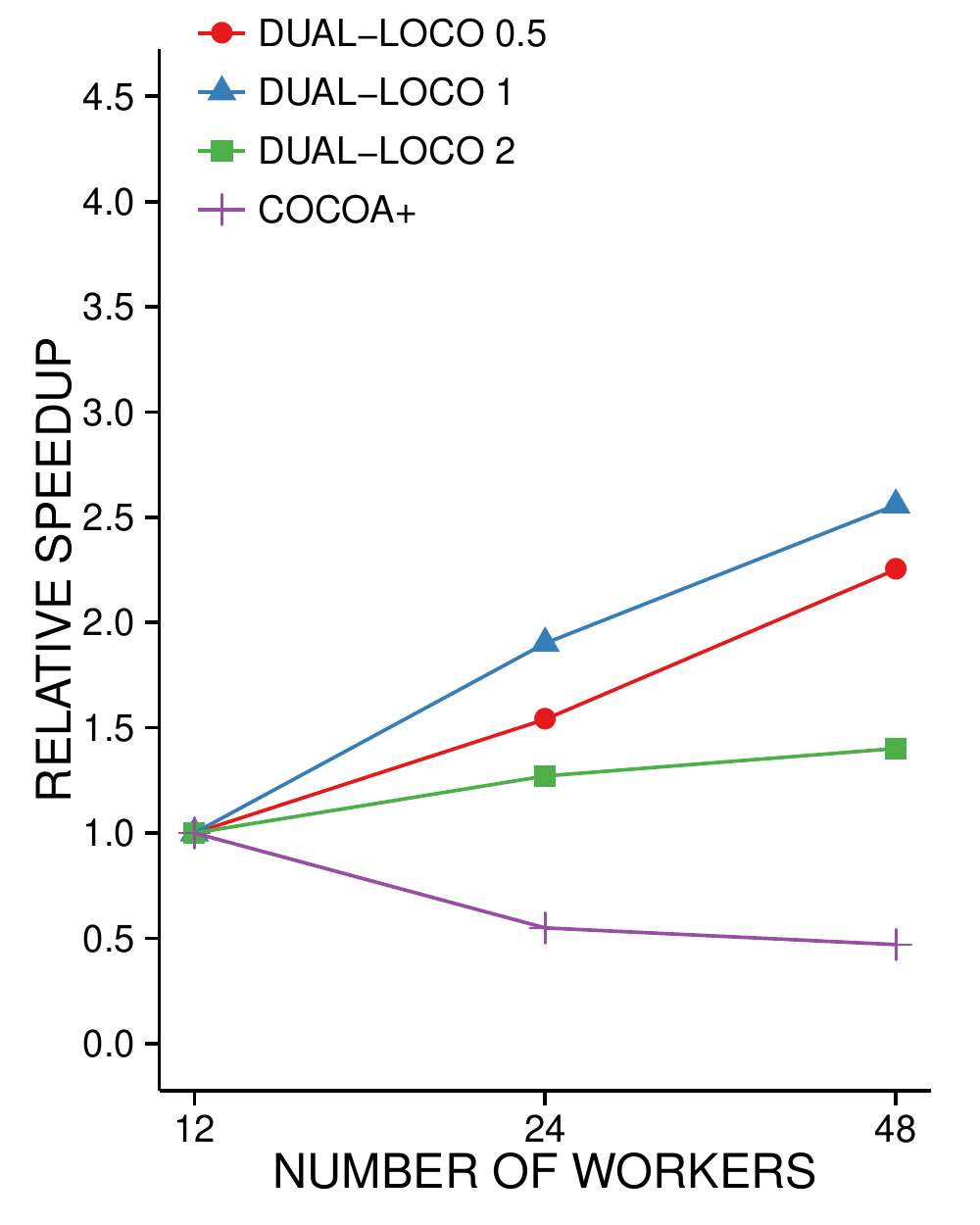}
    \label{fig:speedup_50}
}
\caption{\small 5-fold CV over $\lenlambda = 50$ values for $\lambda$: \protect\subref{fig:totalTime50}
  Total wall clock time and
  \protect\subref{fig:speedup_50} relative speedup. \label{fig:cv50}}
\end{centering}
\vspace{-7.5pt}
\end{figure}

{\bf Climate data.} This is a regression task where we demonstrate that the coefficients returned by \locod are interpretable. The data set contains the outcome of control simulations of the  GISS global circulation model \cite{schmidt2014configuration} and is part of the CMIP5 climate modeling ensemble.  
We aim to forecast the monthly global average temperature $\y$ in February using the air pressure measured in January. Results are very similar for other months. The $\dims = 10,368$ features are pressure measurements taken at $10,368$ geographic grid points in January. The time span of the climate simulation is 531 years and we use the results from two control simulations, yielding $n_{\text{train}} = 849$ and $n_{\text{test}} = 213$. 

In Figure~\ref{fig:beta_comparison} we compare the coefficient estimates for four different methods. The problem is small enough to be solved on a single machine so that the full solution can be computed (using SDCA; cf.\ Figure~\ref{fig:beta_comparison}\subref{fig:beta_full}). This allows us to report the normalized parameter estimation mean squared error ($\text{MSE}_{\solnest}$) with respect to the full solution in addition to the normalized mean squared prediction error ($\text{MSE}$).
The solution indicates that the pressure differential between Siberia (red area, top middle-left) and Europe and the North Atlantic (blue area, top left and top right) is a good predictor for the temperature anomaly. This pattern is concealed in 
Figure~\ref{fig:beta_comparison}\subref{fig:beta_proj} which shows the result of up-projecting the coefficients estimated following a random projection of the columns. Using this scheme for prediction was introduced in \cite{lu:2013}. Although the MSE is similar to the optimal solution, the recovered coefficients are not interpretable as suggested by \cite{zhang2012recovering}. Thus, this method should only be used if prediction is the sole interest. Figure~\ref{fig:beta_comparison}\subref{fig:beta_loco} shows the estimates returned by \locod which is able to recover estimates which are close to the full solution. Finally, Figure~\ref{fig:beta_comparison}\subref{fig:beta_cocoa} shows that \cocoap also attains accurate results.

\begin{figure*}[!tp]
\vspace{-.5cm}
\center
\subfloat[Single machine: Full solution ($\text{MSE} = 0.72$)]{
    \includegraphics[trim=58 73 0 50, clip, width=0.485\textwidth, keepaspectratio=true]{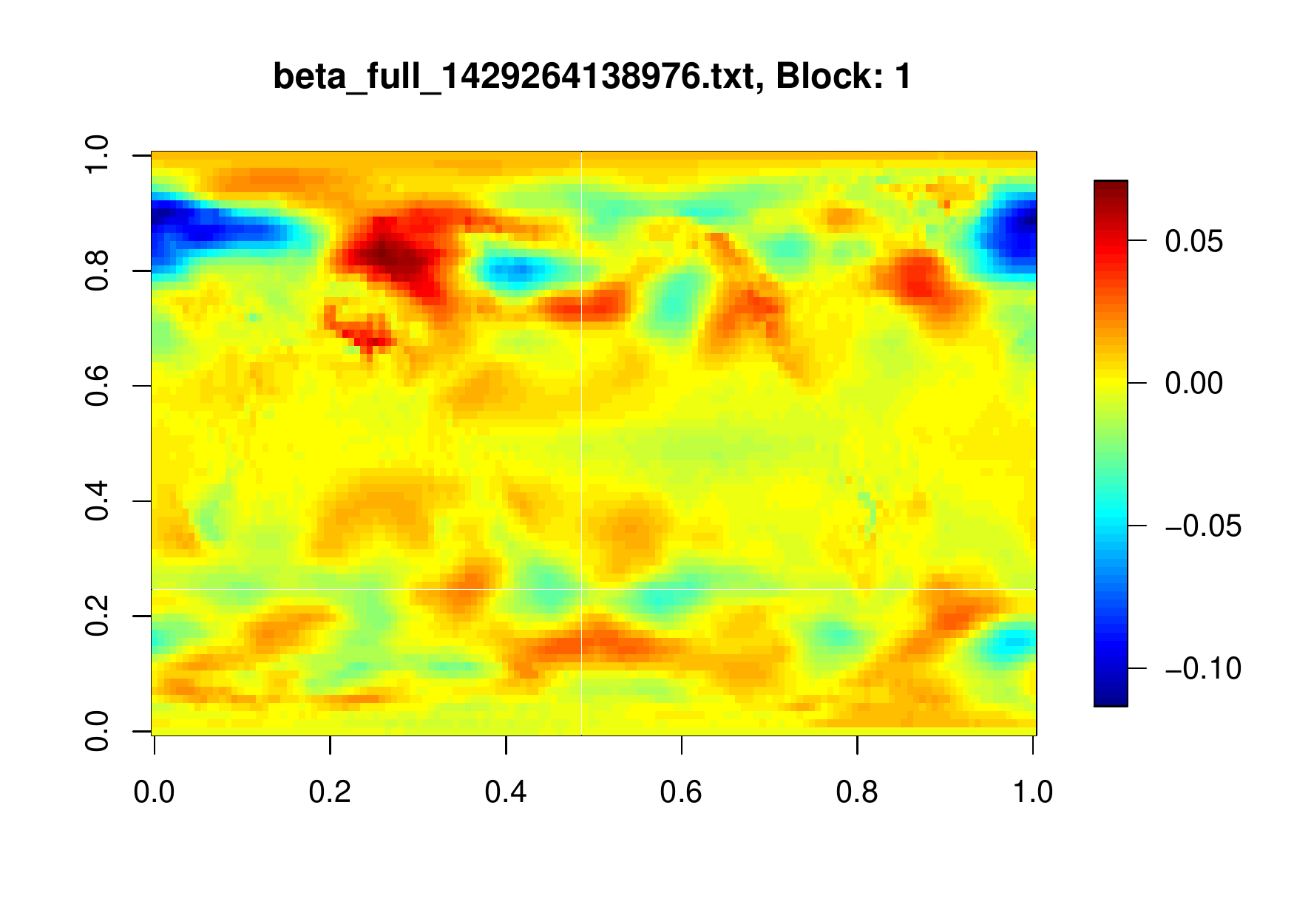}
    \label{fig:beta_full}
}
\subfloat[Single machine: Column-wise compression \newline ($\text{MSE} = 0.73$, $\text{MSE}_{\solnest} = 21.28$)]{
    \includegraphics[trim=58 73 0 50, clip, width=0.485\textwidth, keepaspectratio=true]{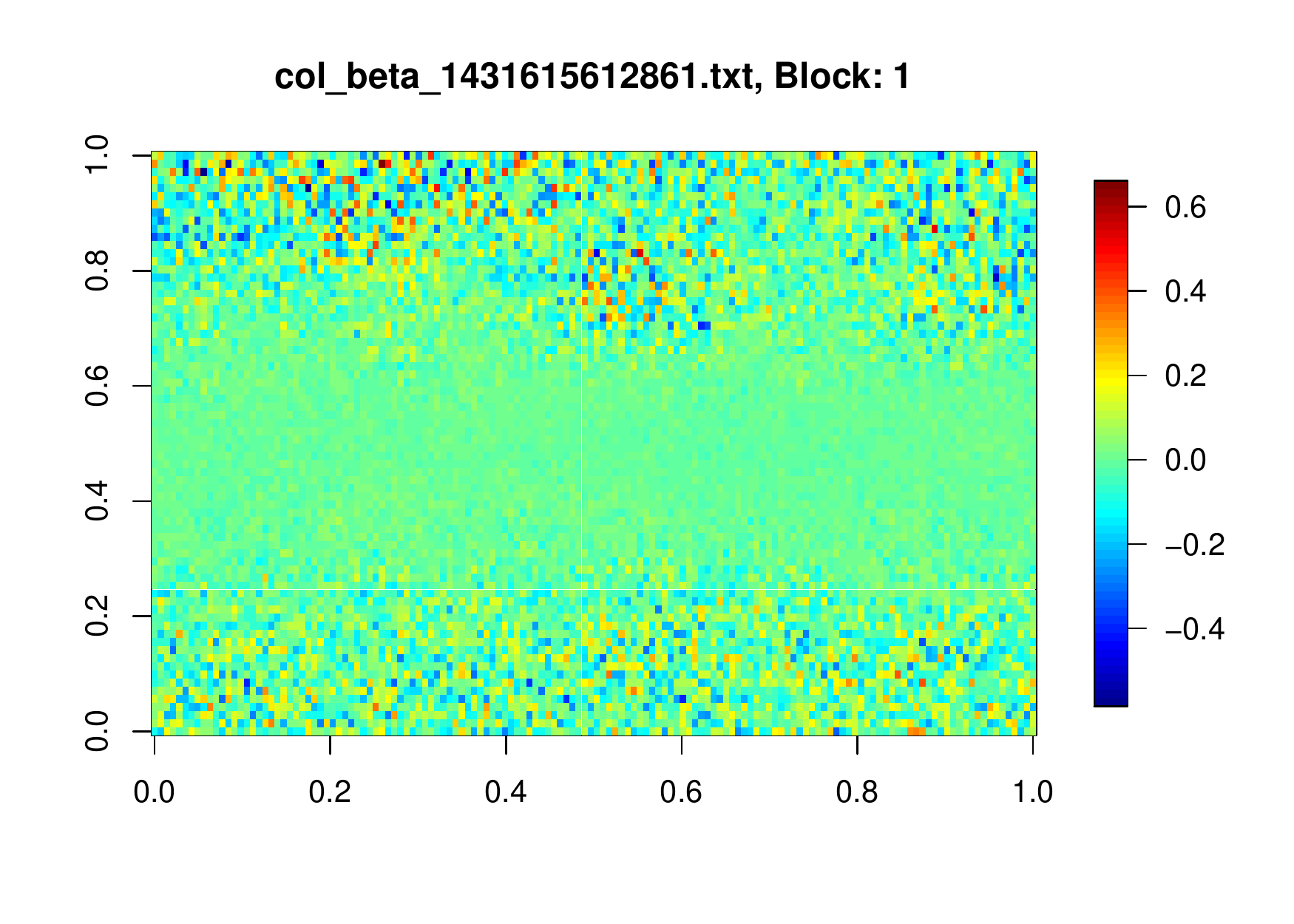}
    \label{fig:beta_proj}
}
\vspace{-0.4cm}
\subfloat[Distributed setting: \locod 10 with $\blocks = 4$ \newline ($\text{MSE} = 0.72$, $\text{MSE}_{\solnest} = 0.02$)]{
    \includegraphics[trim=58 73 0 50, clip, width=0.485\textwidth, keepaspectratio=true]{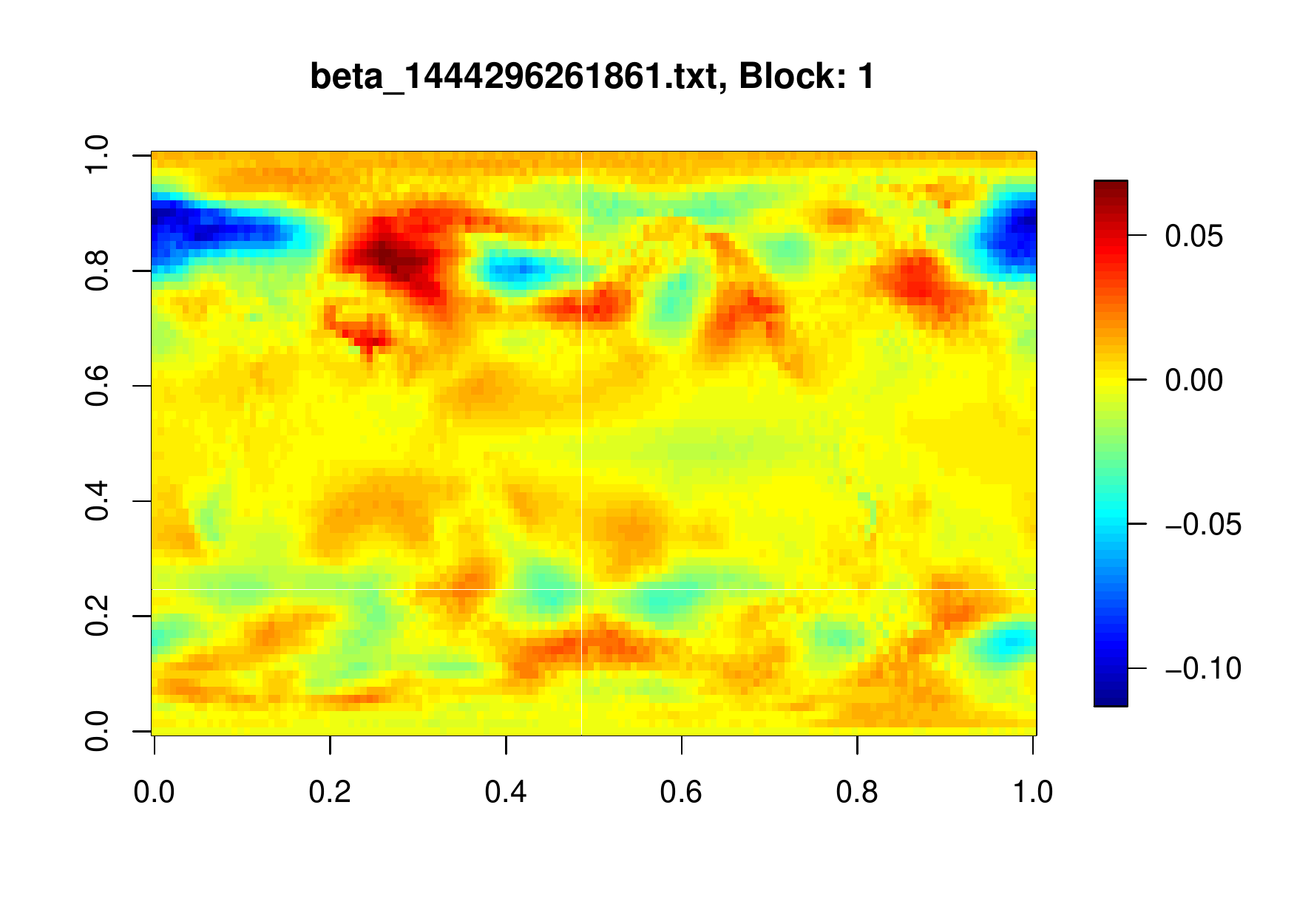}
    \label{fig:beta_loco}
}
\subfloat[Distributed setting: \cocoap  with $\blocks = 4$ \newline ($\text{MSE} = 0.72$, $\text{MSE}_{\solnest} = 0.01$)]{
    \includegraphics[trim=58 73 0 50, clip, width=0.485\textwidth, keepaspectratio=true]{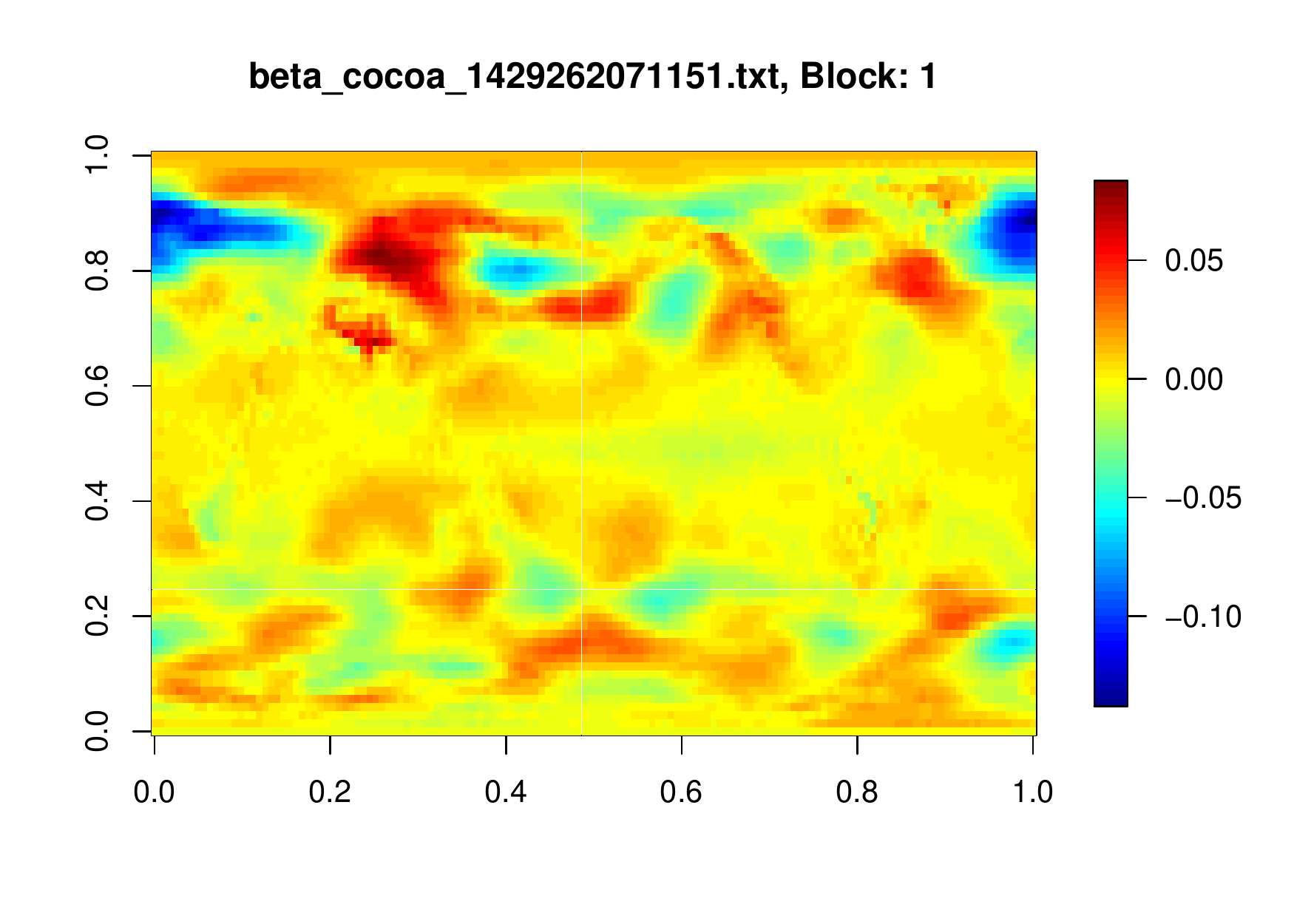}
    \label{fig:beta_cocoa}
}
\vspace{-0.25cm}
\caption{\small Climate data: The regression coefficients are shown as maps with the prime median (passing through London) corresponding to the left and right edge of the plot. The Pacific Ocean lies in the center of each map. 
\label{fig:beta_comparison}}
\vspace{-0.5cm}
\end{figure*}

Considering a longer time period or adding additional model variables such as temperature, precipitation or salinity rapidly increases the dimensionality of the problem while the number of observations remains constant. Each additional variable adds $10,368$ dimensions per month of simulation. Estimating very high-dimensional linear models is a significant challenge in climate science and one where distributing the problem across features instead of observations is advantageous. The computational savings are much larger when distributing across features as $\dims \gg \samp$ and thus reducing $\dims$ is associated with larger gains than when distributing across observations.
\vspace{-0.15cm}
\section{Conclusions and further work}
\vspace{-0.1cm}
We have presented \locod which considers the challenging and rarely studied problem of statistical estimation when data is distributed across features rather than samples.
\locod generalizes \loco to a wider variety of loss functions for regression and classification. We show that the estimated coefficients are close to the optimal coefficients that could be learned by a single worker with access to the entire dataset. The resulting bound is more intuitive and tighter than previous bounds, notably with a very weak dependence on the number of workers. We have demonstrated that \locod is able to recover accurate solutions for large-scale estimation tasks whilst also achieving better scaling than a state-of-the-art competitor, \cocoap, as $\blocks$ increases. Additionally, we have shown that \locod allows for fast model selection using cross-validation.

The dual formulation is convenient for $\ell_2$ penalized problems but other penalties are not as straightforward. Similarly, the theory only holds for smooth loss functions. However, as demonstrated empirically \locod also performs well with a non-smooth loss function. 

As $n$ grows very large, the random feature matrices may become too large to communicate efficiently even when the projection dimension is very small. For these situations, there are a few simple extensions we aim to explore in future work. One possibility is to first perform row-wise random projections (c.f.\ \cite{Mahoney:2011te}) to further reduce the communication requirement. Another option is to distribute $\Xt$ according to rows and columns.

Contrary to stochastic optimization methods, the communication of \locod is limited to a single round. For fixed $\samp$, $\dims$ and $\dimsks$, the amount of communication is deterministic and can be fixed ahead of time. This can be beneficial in settings where there are additional constraints on communication (for example when different blocks of features are  distributed \emph{a priori} across different physical locations). 

Clearly with additional communication, the theoretical and practical performance of \locod could be improved. For example, \cite{zhang2012recovering} suggest an iterative dual random projection scheme which can reduce the error in Lemma \ref{thm:lowrank} exponentially. A related question for future research involves quantifying the amount of communication performed by \locod in terms of known minimax lower bounds \cite{zhang:2013inf}. 


\clearpage 
\newpage
\subsubsection*{References}
{
\small
\bibliographystyle{unsrt}
\bibliography{loco}
}

\newpage
\appendix
\begin{center}
{\Large \bf Supplementary Information for \locod: Distributing \vspace{1mm} \\
Statistical Estimation Using Random Projections}
\end{center}

\section{Supplementary Results}

Here we introduce two lemmas. The first describes the random projection construction which we use in the distributed setting. 

\begin{lem}[Summing random features]\label{lem:sumRF}
Consider the singular value decomposition $\Xt = \Ut \boldSigma \Vt\tr$ where $\Ut \in \R^{\samp\times r}$ and $\Vt \in \R^{p\times r}$ have orthonormal columns and $\boldSigma \in \R^{r\times r}$ is diagonal; $r=\text{rank}(\Xt)$. $c_0$ is a fixed positive constant. In addition to the raw features, let $\feats_k \in \R^{n\times(\dimsk + \dimsks)}$ contain random features which result from summing the $K-1$ random projections from the other workers. Furthermore, assume without loss of generality that the problem is permuted so that the raw features of worker $k$'s problem are the first $\dimsk$ columns of $\Xt$ and $\feats_k$.
Finally, let
$$
\Theta_S = \begin{bmatrix}
\It_\tau & 0\\ 
0 & \RP
\end{bmatrix}
\in \R^{p\times(\dimsk + \dimsks)}
$$
such that $\feats_k = \Xt\Theta_S.$ 

With probability at least $1-\br{\delta + \frac{\dims-\dimsk}{e^r}}$
$$
\nrm{\Vt\tr\Theta_S \Theta_S\tr\Vt - \Vt\tr\Vt} \leq \sqrt{\frac{c_0 \log(2 r /\delta) r}{\dimsks}}.
$$
\end{lem}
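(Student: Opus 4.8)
The plan is to peel off the raw-feature block---which contributes nothing to the deviation---and reduce the claim to a concentration statement for a single SRHT acting on the rank-$\rank$ row space of $\Xt$, then invoke the standard two-stage SRHT analysis. As a preliminary, I would make the ``summing'' reduction explicit: stacking the $\blocks-1$ per-worker projection matrices $\RP_{k'}$ vertically yields a matrix $\RP\in\R^{(\dims-\dimsk)\times\dimsks}$, and block multiplication gives $\Xt_{\minusk}\RP=\sum_{k'\neq k}\Xt_{k'}\RP_{k'}$, so that $\feats_k=[\,\Xt_k,\ \Xt_{\minusk}\RP\,]=\Xt\Theta_S$ with $\Theta_S$ as in the statement. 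What needs a short argument here is that this stacked $\RP$ has the distribution of (or at least the relevant concentration properties of) an SRHT on $\R^{\dims-\dimsk}$; this follows from the recursive Kronecker structure of the Walsh--Hadamard matrix together with the independence of the per-block sign matrices $\Dt_{k'}$ and a common subsampling pattern, so that the per-block transforms assemble into the transform of the concatenated blocks (cf.\ \cite{mcwilliams2014loco}); the analogous statement holds for the DCT used in the implementation.

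Next I would carry out the linear algebra. Since $\Vt$ has orthonormal columns, $\Vt\tr\Vt=\Id_\rank$ and
$$\Vt\tr\Theta_S\Theta_S\tr\Vt-\Vt\tr\Vt=\Vt\tr\!\left(\Theta_S\Theta_S\tr-\Id_\dims\right)\!\Vt .$$
Because $\Theta_S\Theta_S\tr=\diag\!\left(\Id_\dimsk,\ \RP\RP\tr\right)$ is block diagonal, splitting $\Vt$ into its first $\dimsk$ rows $\Vt_k$ and its remaining rows $\Vt_{\minusk}$ collapses the right-hand side to $\Vt_{\minusk}\tr\!\left(\RP\RP\tr-\Id_{\dims-\dimsk}\right)\!\Vt_{\minusk}$. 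The columns of $\Vt_{\minusk}$ are not orthonormal, but $\Vt_{\minusk}=\Wt\Ct$ for an orthonormal basis $\Wt\in\R^{(\dims-\dimsk)\times\rank'}$ of its column space, $\rank'\le\rank$, with $\nrm{\Ct}=\nrm{\Vt_{\minusk}}\le 1$; hence it suffices to bound $\nrm{\Wt\tr\RP\RP\tr\Wt-\Id_{\rank'}}$ and multiply by $\nrm{\Ct}^2\le 1$.

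The technical core is then bounding $\nrm{\Wt\tr\RP\RP\tr\Wt-\Id_{\rank'}}$ for the subspace $\mathrm{span}(\Wt)\subset\R^{N}$, $N=\dims-\dimsk$, with target dimension $\dimsks$, via the two-stage SRHT argument \cite{Tropp:2010uo,Boutsidis:2012tv}. In stage (i), flattening, each row of $\Ht\Dt\Wt$ equals $1/\sqrt{N}$ times the image of a Rademacher vector under $\Wt\tr$, so its squared norm concentrates around $\rank'/N$ and exceeds $\order{\rank/N}$ with probability at most $e^{-\rank}$ up to constants; a union bound over the $N$ rows produces the additive failure term $\tfrac{\dims-\dimsk}{e^{\rank}}$ and a coherence bound $\mu=\order{\rank/N}$. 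In stage (ii), conditioned on this coherence bound, the reweighted uniform subsampling $\sqrt{N/\dimsks}\,\St$ acts as an approximate isometry on $\mathrm{span}(\Wt)$ by a matrix Chernoff/Bernstein inequality for a sum of $\dimsks$ i.i.d.\ rank-one terms, giving $\nrm{\Wt\tr\RP\RP\tr\Wt-\Id_{\rank'}}\le\sqrt{c_0\log(2\rank/\delta)\,\rank/\dimsks}$ with probability $1-\delta$. A union bound over the two failure events yields exactly probability $1-\big(\delta+\tfrac{\dims-\dimsk}{e^{\rank}}\big)$, and since $\rank'\le\rank$ the bound is governed by $\rank$, which is the assertion.

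I expect the main obstacle to be precisely stages (i)--(ii): obtaining a bound with \emph{no} $\log(\dims-\dimsk)$ factor (the factor appearing in a black-box SRHT subspace-embedding bound) by instead paying the additive $\tfrac{\dims-\dimsk}{e^{\rank}}$ probability, which forces the explicit conditioning on the coherence event rather than a one-line appeal to an off-the-shelf lemma and requires care in feeding $\mu=\order{\rank/N}$ into the matrix concentration. A secondary subtlety is the preliminary step: should literal equality between the sum of per-block Hadamard projections and a single Hadamard projection of the concatenated blocks fail for the chosen construction, it is enough to verify that the stage (i) flattening estimate still holds for the stacked transform, since that---rather than exact equality---is all that stage (ii) consumes.
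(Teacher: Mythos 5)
Your overall route coincides with the paper's: the same peeling-off of the raw-feature block (the paper likewise splits $\Vt$ into its first $\dimsk$ rows $\Vt_k$ and the remainder $\Vt_{\minusk}$ and reduces the claim to bounding $\nrm{\Vt_{\minusk}\tr\RP\RP\tr\Vt_{\minusk}-\Vt_{\minusk}\tr\Vt_{\minusk}}$), followed by the low-rank SRHT concentration bound with the two failure terms $\delta$ and $(\dims-\dimsk)/e^{\rank}$. The only difference at that end is that the paper invokes Lemma 1 of \cite{lu:2013} as a black box, whereas you propose to re-derive it via the coherence-plus-matrix-Chernoff two-stage argument; that is legitimate and not where the difficulty lies.

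The genuine gap is in your treatment of the summing step. You assert that stacking the $\blocks-1$ per-block projections $\RP_{k'}$ yields a matrix distributed as (or sharing the concentration properties of) a single SRHT on $\R^{\dims-\dimsk}$, via the Kronecker structure of the Hadamard matrix and a ``common subsampling pattern''. In the algorithm each worker draws its own sign matrix \emph{and} its own subsampling independently, and the stacked transform is block diagonal in its Hadamard part, so it is not a single $(\dims-\dimsk)$-dimensional SRHT in distribution. Your fallback --- verify only the stage-(i) flattening estimate for the stacked transform --- does not repair this, because stage (ii) changes as well: each of the $\dimsks$ output coordinates is the \emph{sum} of one sampled (sign-flipped, Hadamard-mixed) row from each of the $\blocks-1$ blocks, so $\RP\RP\tr$ contains cross-block terms $\RP_{k'}\RP_{k''}\tr$ with $k'\neq k''$, and the concentration step is not a sum of $\dimsks$ rank-one terms sampled uniformly from a single family of $\dims-\dimsk$ coordinates, as your Chernoff/Bernstein step assumes. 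Controlling exactly this summed, independently subsampled structure is the content of the paper's separate Lemma~\ref{lem:summed_row} (summed row sampling, proved with the matrix Chernoff bound of Lemma~\ref{lem:chernoff} and combined with Lemma 3.3 of \cite{Tropp:2010uo} for the flattening). To close your argument you would need an analogous statement showing that the summed subsampled rows still act as an approximate isometry on the relevant rank-$\rank$ subspace --- either by showing the cross-block contributions vanish or are negligible, or by reformulating the Chernoff stage for the summed scheme; it does not follow from the flattening bound alone.
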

\begin{proof}
	See Appendix \ref{sec:sup_row}.
\end{proof}

\begin{defn} \label{def:losses}
For ease of exposition, we shall rewrite the dual problems so that we consider minimizing convex objective functions.  More formally, the original problem is then given by
\begin{equation}\label{eq:min_orig_dual}
\alphaopt = \argmin_{\boldalpha\in\R^{\samp}} \cb{D(\boldalpha) := \sum_{i=1}^\samp f^{*}_i(\alpha_i) + \frac{1}{2\samp\lambda}  \boldalpha\tr {\Kt} \boldalpha } .
\end{equation}

The problem worker $k$ solves is described by
\begin{equation}\label{eq:min_local_dual}
\alphatil = \argmin_{\boldalpha\in\R^{\samp}} \cb{ \tilde{D}_k(\boldalpha) :=  \sum_{i=1}^\samp f^{*}_i(\alpha_i) + \frac{1}{2\samp\lambda}  \boldalpha\tr \tilde{\Kt}_k \boldalpha }.
\end{equation}
Recall that $\tilde{\Kt}_k = \feats_k\feats_k\tr$, where $\feats_k$ is the concatenation of the $\dimsk$ \emph{raw} features and $\dimsks$ \emph{random} features for worker $k$.

\end{defn}
To proceed we need the following result which relates the solution of the original problem to that of the approximate problem solved by worker $k$.

\begin{lem}[Adapted from Lemma 1 \cite{zhang2014random}] \label{lem:newlem}
Let $\alphaopt$ and $\alphatil$ be as defined in Definition \ref{def:losses}. We obtain
\begin{equation} \label{eq:ineq1}
\frac{1}{\lambda}(\alphatil - \alphaopt)\tr \br{\Kt-\tilde{\Kt}_k}\alphaopt \geq \frac{1}{\lambda}(\alphatil - \alphaopt)\tr \tilde{\Kt_k}(\alphatil - \alphaopt)  .
\end{equation}
\end{lem}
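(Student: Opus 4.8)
The plan is to obtain \eqref{eq:ineq1} directly from the first-order optimality conditions of the two convex programs in Definition~\ref{def:losses} together with monotonicity of the subdifferential. Write $g^{*}(\boldalpha) := \sum_{i=1}^{\samp} f_i^{*}(\alpha_i)$; this function is convex because each $f_i$ is convex and Fenchel conjugation preserves convexity. The remaining parts of $D$ and $\tilde D_k$ are the differentiable quadratics $\frac{1}{2\samp\lambda}\boldalpha\tr\Kt\boldalpha$ and $\frac{1}{2\samp\lambda}\boldalpha\tr\tilde{\Kt}_k\boldalpha$, with gradients $\frac{1}{\samp\lambda}\Kt\boldalpha$ and $\frac{1}{\samp\lambda}\tilde{\Kt}_k\boldalpha$ respectively.

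First I would record the stationarity conditions. Since $\alphaopt$ minimizes $D$ over $\R^{\samp}$, the sum rule for subdifferentials gives $0 \in \partial g^{*}(\alphaopt) + \frac{1}{\samp\lambda}\Kt\alphaopt$, i.e.\ there is a subgradient $\xi^{*} \in \partial g^{*}(\alphaopt)$ with $\xi^{*} = -\frac{1}{\samp\lambda}\Kt\alphaopt$; analogously there is $\tilde\xi \in \partial g^{*}(\alphatil)$ with $\tilde\xi = -\frac{1}{\samp\lambda}\tilde{\Kt}_k\alphatil$. Monotonicity of $\partial g^{*}$ applied to these two particular subgradients yields $(\tilde\xi - \xi^{*})\tr(\alphatil - \alphaopt) \geq 0$; substituting the two expressions and multiplying by $\samp$ gives $\frac{1}{\lambda}(\Kt\alphaopt - \tilde{\Kt}_k\alphatil)\tr(\alphatil - \alphaopt) \geq 0$. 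The remaining step is the identity $\Kt\alphaopt - \tilde{\Kt}_k\alphatil = (\Kt - \tilde{\Kt}_k)\alphaopt - \tilde{\Kt}_k(\alphatil - \alphaopt)$, obtained by adding and subtracting $\tilde{\Kt}_k\alphaopt$; inserting it and transposing the second term gives exactly \eqref{eq:ineq1}.

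The derivation is essentially bookkeeping, so there is no serious obstacle; the one place that warrants care — and presumably the reason the source says ``adapted from'' — is that $f_i^{*}$ need not be differentiable. Lipschitz continuity of $f_i$ makes $\mathrm{dom}\,f_i^{*}$ bounded, so one should genuinely argue with subgradients and the monotonicity inequality rather than with gradients throughout. If in addition $f_i$ is smooth then $f_i^{*}$ is strongly convex and the same chain of steps produces a strengthened inequality with an extra nonnegative curvature term on the right-hand side, but that refinement is not needed here. Symmetry and positive semidefiniteness of $\Kt$ and of $\tilde{\Kt}_k = \feats_k\feats_k\tr$ are what make all the quadratic forms well defined and matter when \eqref{eq:ineq1} is combined with Lemma~\ref{lem:sumRF} downstream, but they play no role in establishing \eqref{eq:ineq1} itself.
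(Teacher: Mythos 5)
Your argument is correct: the stationarity conditions $\xi^{*}=-\frac{1}{\samp\lambda}\Kt\alphaopt$ and $\tilde\xi=-\frac{1}{\samp\lambda}\tilde{\Kt}_k\alphatil$, monotonicity of $\partial g^{*}$, and the add-and-subtract identity give exactly \eqref{eq:ineq1} (with symmetry of the two Gram matrices used implicitly in the gradient formulas and the final transposition, a harmless point since both are of the form $\Xt\Xt\tr$ and $\feats_k\feats_k\tr$). The paper itself supplies no proof here—it defers to Lemma 1 of \cite{zhang2014random}—and your derivation is essentially that same optimality-conditions-plus-monotonicity argument, so it matches the intended proof.
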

\begin{proof}
	See \cite{zhang2014random}.
\end{proof}

For our main result, we rely heavily on the following variant of Theorem 1 in \cite{zhang2014random} which bounds the difference between the coefficients estimated by worker $k$, $\solnest_k$ and the corresponding coordinates of the optimal solution vector $\solnopt_k$.
\begin{lem}[Local optimization error. Adapted from \cite{zhang2014random}]\label{thm:lowrank}
 For $\rho = \sqrt{\frac{c_0 \log(2\rank/\delta) \rank}{\dimsks}}$ the following holds
$$
\nrm{\solnest_k - \solnopt_k} \leq \frac{\rho}{1-\rho}\nrm{\solnopt} 
$$
with probability at least $1-\br{\delta + \frac{\dims-\dimsk}{e^\rank}}$.
\end{lem}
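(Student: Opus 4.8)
The plan is to adapt the dual random-projection argument behind Lemma \ref{lem:newlem} (due to Zhang and Xiao) to the distributed block setup, reducing the claim to the spectral perturbation bound of Lemma \ref{lem:sumRF}. Throughout I assume, as in Lemma \ref{lem:sumRF}, that the columns have been permuted so that worker $k$'s raw features come first, so $\feats_k = \Xt\Theta_S$ and $\tilde{\Kt}_k = \Xt\Theta_S\Theta_S\tr\Xt\tr$. \emph{First reduction: pass from the block error to a full reconstruction error.} By Step 6 of Algorithm \ref{alg:batch}, $\solnest_k = -\frac{1}{\samp\lambda}\Xt_k\tr\alphatil_k$, while $\solnopt_k$ is the restriction to the coordinates in $\dimset_k$ of $\solnopt = -\frac{1}{\samp\lambda}\Xt\tr\alphaopt$, i.e.\ $\solnopt_k = -\frac{1}{\samp\lambda}\Xt_k\tr\alphaopt$. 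Hence $\solnest_k - \solnopt_k = -\frac{1}{\samp\lambda}\Xt_k\tr(\alphatil_k - \alphaopt)$, and since $\Xt_k\tr v$ is a subvector of $\Xt\tr v$ we get $\nrm{\solnest_k - \solnopt_k} \le \frac{1}{\samp\lambda}\nrm{\Xt\tr(\alphatil_k - \alphaopt)}$, so it suffices to bound the right-hand side by $\frac{\rho}{1-\rho}\nrm{\solnopt}$.

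\emph{Second step: rotate into SVD coordinates.} Write $\Xt = \Ut\boldSigma\Vt\tr$ and set $\mathbf{P} := \Vt\tr\Theta_S\Theta_S\tr\Vt \in \R^{\rank\times\rank}$, so that $\Kt = \Ut\boldSigma^2\Ut\tr$, $\tilde{\Kt}_k = \Ut\boldSigma\mathbf{P}\boldSigma\Ut\tr$ and $\Kt - \tilde{\Kt}_k = \Ut\boldSigma(\It_\rank - \mathbf{P})\boldSigma\Ut\tr$. By Lemma \ref{lem:sumRF} (using $\Vt\tr\Vt = \It_\rank$), on an event of probability at least $1-\br{\delta + \frac{\dims-\dimsk}{e^\rank}}$ we have $\nrm{\mathbf{P} - \It_\rank} \le \rho$; condition on this event. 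Introducing the $\rank$-vectors $\at := \boldSigma\Ut\tr\alphaopt$ and $\bt := \boldSigma\Ut\tr(\alphatil_k - \alphaopt)$, orthonormality of the columns of $\Vt$ gives $\nrm{\solnopt} = \frac{1}{\samp\lambda}\nrm{\at}$ and $\frac{1}{\samp\lambda}\nrm{\Xt\tr(\alphatil_k - \alphaopt)} = \frac{1}{\samp\lambda}\nrm{\bt}$, so the target reduces to $\nrm{\bt} \le \frac{\rho}{1-\rho}\nrm{\at}$.

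\emph{Third step: feed this into Lemma \ref{lem:newlem} and finish.} Inequality \eqref{eq:ineq1}, after cancelling the common $1/\lambda$, reads $(\alphatil_k - \alphaopt)\tr(\Kt - \tilde{\Kt}_k)\alphaopt \ge (\alphatil_k - \alphaopt)\tr\tilde{\Kt}_k(\alphatil_k - \alphaopt)$. Substituting the SVD forms above, the left-hand side equals $\bt\tr(\It_\rank - \mathbf{P})\at$ and the right-hand side equals $\bt\tr\mathbf{P}\bt$. Using $\nrm{\mathbf{P} - \It_\rank}\le\rho$, the right-hand side is at least $\nrm{\bt}^2 - \rho\nrm{\bt}^2 = (1-\rho)\nrm{\bt}^2$ and the left-hand side is at most $\rho\nrm{\bt}\nrm{\at}$, so $(1-\rho)\nrm{\bt}^2 \le \rho\nrm{\bt}\nrm{\at}$. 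Dividing by $(1-\rho)\nrm{\bt}>0$ (the claimed bound being vacuous otherwise) yields $\nrm{\bt} \le \frac{\rho}{1-\rho}\nrm{\at}$, and chaining back through the two reductions gives $\nrm{\solnest_k - \solnopt_k} \le \frac{\rho}{1-\rho}\nrm{\solnopt}$ on the good event, as required.

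\emph{Main obstacle.} The arithmetic is short once the setup is in place; the care goes into the first reduction — correctly identifying $\solnopt_k$ as the coordinate restriction of the global dual-to-primal map and verifying that the block/submatrix structure only helps — and into applying the same orthogonal rotation consistently to both sides of \eqref{eq:ineq1}, so that everything collapses onto the $\rank\times\rank$ perturbation $\It_\rank - \mathbf{P}$ that Lemma \ref{lem:sumRF} controls; one should also keep in mind that the statement is only meaningful in the regime $\rho<1$, which is the one covered by the standing assumption on $\dimsks$.
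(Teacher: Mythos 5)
Your proposal is correct and follows essentially the same route as the paper's own proof: rotate into the SVD coordinates, control $\It_\rank - \Vt\tr\Theta_S\Theta_S\tr\Vt$ via Lemma~\ref{lem:sumRF}, apply Lemma~\ref{lem:newlem} with Cauchy--Schwarz and the spectral bound to get $(1-\rho)\nrm{\gammatil-\gammaopt}\leq\rho\nrm{\gammaopt}$, and pass from the full vector to block $k$ by the subvector inequality. The only differences are cosmetic (you perform the block reduction up front rather than at the end, and you track the $1/(\samp\lambda)$ factor explicitly, which in any case cancels in the ratio).
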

The proof closely follows the proof of Theorem 1 in \cite{zhang2014random} which we restate here identifying the major differences. 
\begin{proof}

Let the quantities $\tilde{D}_k(\boldalpha)$, $ \tilde{\Kt}_k$, be as in Definition \ref{def:losses}. For ease of notation, we shall omit the subscript $k$ in $\tilde{D}_k(\boldalpha)$ and $ \tilde{\Kt}_k$ in the following.

By the SVD we have $\Xt=\Ut\boldSigma\Vt\tr$. So $\Kt = \Ut \boldSigma \boldSigma \Ut\tr$ and $\tilde{\Kt} = \Ut\boldSigma \Vt \tr \RP \RP\tr \Vt \boldSigma \Ut\tr$. We can make the following definitions
$$
\gamma^{*} = \boldSigma \Ut\tr \alphaopt, \qquad \tilde{\gamma} = \boldSigma \Ut\tr \alphatil .
$$
Defining $\tilde{\Mt} = \Vt\tr\RP\RP\tr\Vt$ and plugging these into Lemma \ref{lem:newlem} 
 we obtain
\begin{align}
(\tilde{\gamma} - \gamma^{*})\tr (\It - \tilde{\Mt}) \gamma^{*}
 \geq (\tilde{\gamma} - \gamma^{*})\tr \tilde{\Mt} (\tilde{\gamma} - \gamma^{*}) . \label{eq:ineqL}
\end{align}

We now bound the spectral norm of $\It - \tilde{\Mt}$ using Lemma \ref{lem:sumRF}. Recall that Lemma \ref{lem:sumRF} bounds the difference between a matrix and its approximation by a \emph{distributed} dimensionality reduction using the SRHT. 

Using the Cauchy-Schwarz inequality we have for the l.h.s. of \eqref{eq:ineqL}
\begin{align*}
& (\gammatil - \gammaopt)\tr \br{\It-\tilde{\Mt}} \gammaopt  
\leq  \rho\nrm{ \gammaopt}\nrm{ \gammatil - \gammaopt }
\end{align*}

For the r.h.s. of \eqref{eq:ineqL}, we can write 
\begin{align*}
&(\gammatil - \gammaopt)\tr  \tilde{\Mt} (\gammatil - \gammaopt)	
\\
& = \nrm{ \gammatil-\gammaopt}^2 - (\gammatil - \gammaopt)\tr  \br{\It-\tilde{\Mt}} (\gammatil - \gammaopt)
\\
& \geq 	\nrm{ \gammatil-\gammaopt}^2 - \rho\nrm{ \gammatil - \gammaopt }^2 
\\
& = (1-\rho) \nrm{ \gammatil - \gammaopt}^2 .
\end{align*}

Combining these two expressions and inequality \eqref{eq:ineqL} yields
\begin{align}
	(1-\rho) \nrm{ \gammatil - \gammaopt}^2
	& \leq \rho\nrm{ \gammaopt}\nrm{ \gammatil - \gammaopt } \nonumber\\
	(1-\rho) \nrm{ \gammatil - \gammaopt}
	& \leq \rho\nrm{ \gammaopt} .\label{eq:email}
\end{align}

From the definition of $\gammaopt$ and $\gammatil$ above and $\solnopt$ and $\tilde{\soln}$, respectively we have
$$
\solnopt = -\frac{1}{\lambda}\Vt \gammaopt, \qquad \tilde{\soln} = -\frac{1}{\lambda}\Vt \gammatil
$$
so $\frac{1}{\lambda}\nrm{\gammaopt} = \nrm{\solnopt}$ and $\nrm{\tilde{\soln} - \solnopt} = \frac{1}{\lambda}\nrm{\gammatil - \gammaopt}$ due to the orthonormality of $\Vt$. Plugging this into \eqref{eq:email} and using the fact that $\nrm{\solnopt - \tilde{\soln}} \geq \nrm{\solnopt_k - \solnest_k}$ we obtain the stated result.
\end{proof}

\section{Proof of Row Summing Lemma}\label{sec:sup_row}

\begin{proof}[Proof of Lemma \ref{lem:sumRF} ]

Let $\Vt_k$ contain the first $\dimsk$ rows of $\Vt$ and let $\Vt_{\minusk}$ be the matrix containing the remaining rows. Decompose the matrix products as follows
\begin{align*}
 \Vt\tr\Vt &= \Vt_k\tr\Vt_k + \Vt_{\minusk}\tr\Vt_{\minusk} \\
\quad &\text{ and } \quad \\
\Vt\tr\Theta_S \Theta_S\tr\Vt &=  \Vt_k\tr\Vt_k+ \tilde{\Vt}_k\tr\tilde{\Vt}_k
\end{align*} 

with  $\tilde{\Vt}_k\tr = \Vt_{\minusk}\tr \RP$. Then
\begin{align*}
& \nrm{\Vt\tr\Theta_S \Theta_S\tr\Vt - \Vt\tr\Vt} \\ 
& = \nrm{\Vt_k\tr\Vt_k+ \tilde{\Vt}_k\tr\tilde{\Vt}_k  - \Vt_k\tr\Vt_k - \Vt_{\minusk}\tr\Vt_{\minusk} } 
\\
& = \nrm{\Vt_{\minusk}\tr \RP \RP\tr \Vt_{\minusk} - \Vt_{\minusk}\tr\Vt_{\minusk} }.
\end{align*}
Since $\Theta_S$ is an orthogonal matrix, from Lemma 3.3 in \cite{Tropp:2010uo} and Lemma \ref{lem:summed_row}, summing $(K-1)$ independent SRHTs from $\dimsk$ to $\dimsks$ is equivalent to applying a single SRHT from $\dims-\dimsk$ to $\dimsks$. 
Therefore we can simply apply Lemma 1 of \cite{lu:2013} to the above to obtain the result.
\end{proof}

\begin{lem}[Summed row sampling]\label{lem:summed_row}
Let $\Wt$ be an $\samp \times \dims$ matrix with orthonormal columns. Let $\Wt_1,\ldots,\Wt_K$ be a balanced, random partitioning of the rows of $\Wt$ where each matrix $\Wt_k$ has exactly $\tau=\samp/ \blocks$ rows. 
Define the quantity $M:=\samp \cdot \max_{j=1,\ldots\samp}\nrm{e_j\tr\Wt}^2$. For a positive parameter $\alpha$, select the subsample size 
$$
l \cdot \blocks \geq \alpha M\log(\dims) .
$$
Let $\St_{T_k}\in\R^{l\times \tau}$ denote the operation of uniformly at random sampling a subset, $T_k$ of the rows of $\Wt_k$ by sampling $l$ coordinates from $\cb{1,2,\ldots \tau}$ without replacement. Now denote $\St \Wt$ as the sum of the subsampled rows 
$$\St \Wt = \sum_{k=1}^K\ \br{\St_{T_k}\Wt_k}.$$ 
Then
\begin{align*}
\sqrt{\frac{(1-\delta)l \cdot K }{\samp}} &\leq \sigma_{\dims}(\St \Wt) \\ 
\quad &\text{ and }  \quad  \\
\sigma_{1}(\St \Wt) &\leq \sqrt{\frac{(1+\eta)l  \cdot K}{\samp}} 
\end{align*}
with failure probability at most 
$$
\dims\cdot \sq{\frac{e^{-\delta}}{(1-\delta)^{1-\delta}}}^{\alpha \log \dims} 
+ 
\dims\cdot \sq{\frac{e^{\eta}}{(1+\eta)^{1+\eta}}}^{\alpha \log \dims}
$$
\end{lem}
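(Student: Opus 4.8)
The plan is to read this as a matrix Chernoff bound for row sampling without replacement, applied to the stratified sample that draws $l$ rows from each of the $\blocks$ blocks. Write the rows of $\Wt$ as $w_1\tr,\dots,w_\samp\tr$. Because the $\blocks$ blocks partition the rows and the samplings in distinct blocks are independent, the Gram matrix of the subsampled object is, up to cross‑block terms $w_j w_{j'}\tr$ (with $j\in T_k$, $j'\in T_{k'}$, $k\neq k'$) that vanish in expectation thanks to the sign randomness present in the SRHT application and are absorbed by Lemma 3.3 of \cite{Tropp:2010uo},
\[
\Gt := (\St\Wt)\tr(\St\Wt) = \sum_{k=1}^{\blocks}\ \sum_{j\in T_k} w_j w_j\tr ,
\]
a sum of $l\cdot\blocks$ rank‑one positive semidefinite matrices. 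First I would compute $\bE\,\Gt$: since the partition is balanced, each index $j$ lies in a unique block of size $\tau=\samp/\blocks$ and is sampled with probability $l/\tau = l\blocks/\samp$, and $\sum_{j=1}^{\samp} w_j w_j\tr = \Wt\tr\Wt = \It_\dims$, so $\bE\,\Gt = \tfrac{l\blocks}{\samp}\It_\dims$ and hence $\lambda_{\min}(\bE\,\Gt)=\lambda_{\max}(\bE\,\Gt)=l\blocks/\samp$. Next I would bound each summand uniformly: $\lambda_{\max}(w_j w_j\tr)=\nrm{w_j}^2\le\max_j\nrm{e_j\tr\Wt}^2 = M/\samp =: R$ (and note $M\ge\dims$ by averaging, so the ambient dimension never exceeds $M$).

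The key step is to invoke the matrix Chernoff inequality for sampling without replacement, conditionally on the (random) partition: within a block the $l$ sampling indicators are negatively associated and across blocks they are independent, and in either regime the matrix Laplace transform underlying the Chernoff estimate is dominated by the with‑replacement case, so the textbook bounds apply with ambient dimension $\dims$, mean $\mu_{\min}=\mu_{\max}=l\blocks/\samp$, and per‑term bound $R=M/\samp$:
\[
\Prob\bigl[\lambda_{\min}(\Gt)\le(1-\delta)\tfrac{l\blocks}{\samp}\bigr]\le \dims\cdot\Bigl[\tfrac{e^{-\delta}}{(1-\delta)^{1-\delta}}\Bigr]^{\mu_{\min}/R},
\]
and symmetrically $\Prob[\lambda_{\max}(\Gt)\ge(1+\eta)\tfrac{l\blocks}{\samp}]\le \dims\cdot[e^{\eta}/(1+\eta)^{1+\eta}]^{\mu_{\max}/R}$, where $\mu_{\min}/R=\mu_{\max}/R=l\blocks/M$. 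Since the base of each bracket lies in $(0,1)$ and the hypothesis $l\cdot\blocks\ge\alpha M\log\dims$ gives $l\blocks/M\ge\alpha\log\dims$, monotonicity lets us replace the exponent by $\alpha\log\dims$. A union bound over the two tails, together with $\sigma_{\dims}(\St\Wt)^2=\lambda_{\min}(\Gt)$ and $\sigma_1(\St\Wt)^2=\lambda_{\max}(\Gt)$, then yields exactly the claimed two‑sided bound with the stated failure probability; since every step holds conditionally on the partition, it holds unconditionally.

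I expect the main obstacle to be the accounting around the sampling scheme rather than any single inequality: making precise that stratified, without‑replacement sampling is at least as concentrated as the i.i.d.\ case covered by the standard matrix Chernoff bound — so that the clean exponents $\mu/R$ are preserved — and, under the literal ``sum of $l\times\dims$ matrices'' reading, cleanly routing the cross‑block terms through the sign randomness and Lemma 3.3 of \cite{Tropp:2010uo}. The remaining pieces — the expectation identity $\bE\,\Gt=\tfrac{l\blocks}{\samp}\It_\dims$, the coherence bound $R=M/\samp$, and the substitution $l\blocks/M\ge\alpha\log\dims$ — are routine.
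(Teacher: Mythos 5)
Your proposal follows essentially the same route as the paper's proof: write $\Gt=(\St\Wt)\tr(\St\Wt)$ as a sum of $l\cdot\blocks$ rank-one outer products $\wt_j\wt_j\tr$ drawn without replacement, compute $\bE\,\Gt=\frac{l\blocks}{\samp}\It$, bound each summand by $M/\samp$, invoke the matrix Chernoff bound (the paper's Lemma~\ref{lem:chernoff} is already stated for without-replacement sampling, so your negative-association/domination digression is unnecessary), substitute $l\blocks/M\geq\alpha\log\dims$, and convert eigenvalues of $\Gt$ to singular values of $\St\Wt$ with a union bound over the two tails. The only difference is cosmetic: the cross-block terms are discarded in both arguments (the paper asserts they cancel, you route them through the SRHT sign randomness and Lemma 3.3 of \cite{Tropp:2010uo}), so the approaches coincide.
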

\begin{proof}
Define $\wt\tr_j$ as the $j^{th}$ row of $\Wt$ and $M:=\samp\cdot\max_j \nrm{\wt_j}^2$. Suppose $K=2$ and consider the matrix 
\begin{align*}
\Gt_2: & = (\St_1\Wt_1+\St_2\Wt_2)\tr (\St_1\Wt_1+\St_2\Wt_2) 
\\
& = (\St_1\Wt_1)\tr  (\St_1\Wt_1) + (\St_2\Wt_2)\tr  (\St_2\Wt_2) \\ 
& \quad + (\St_1\Wt_1)\tr  (\St_2\Wt_2) + (\St_2\Wt_2)\tr  (\St_1\Wt_1) .
\end{align*}
In general, we can express $\Gt:=(\St\Wt)\tr(\St\Wt)$ as 
$$
\Gt := \sum_{k=1}^K \sum_{j\in T_k} \br{ \wt_j\wt_j\tr + \sum_{k'\neq k} \sum_{j'\in T_k'}  \wt_j \wt_{j'}\tr}. 
$$
By the orthonormality of $\Wt$, the cross terms cancel as $\wt_j\wt_{j'}\tr = {\bf 0}$, yielding
$$
\Gt := \br{\St\Wt}\tr\br{\St\Wt} = \sum_{k=1}^K\sum_{j\in T_k} \wt_j\wt_j\tr .
$$
We can consider $\Gt$ as a sum of $l\cdot \blocks$ random matrices 
$$
{\Xt^{(1)}_{1}},\ldots,{\Xt^{(K)}_{1}}  
,\ldots ,
{\Xt^{(1)}_l}
,\ldots,{\Xt^{(K)}_{l}}
$$ 
sampled uniformly at random without replacement from the family $\mathcal{X} := \cb{\wt_i\wt_i\tr: i=1,\ldots,\dimsk \cdot \blocks}$.

To use the matrix Chernoff bound in Lemma \ref{lem:chernoff}, we require the quantities $\mu_{\min}$, $\mu_{\max}$ and $B$. Noticing that $\lambda_{\max}(\wt_j\wt_j\tr) = \nrm{\wt_j}^2 \leq \frac{M}{\samp}$, we can set $B \leq M/\samp$.

Taking expectations with respect to the random partitioning ($\bE_{P}$) and the subsampling within each partition ($\bE_{S}$),
using the fact that columns of $\Wt$ are orthonormal we obtain 
$$
\bE \sq{{\Xt^{(k)}_{1}}} =  \bE_{P} \bE_{S} \Xt_1^{(k)} = \frac{1}{\blocks}  \frac{1}{\tau} \sum_{i=1}^{\blocks \tau} \wt_i\wt_i\tr = \frac{1 }{n}\Wt\tr\Wt = \frac{1 }{n}\It
$$
Recall that we take $l$ samples in $\blocks$ blocks so we can define
$$
\mu_{\min} = \frac{l\cdot K}{\samp} \qquad \text{ and } \qquad \mu_{\max} = \frac{l\cdot K}{\samp} .
$$
\\
Plugging these values into Lemma \ref{lem:chernoff}, the lower and upper Chernoff bounds respectively yield
\begin{align*}
& \mathbb{P} \cb{ \lambda_{\min}\br{\Gt} \leq (1-\delta)\frac{l \cdot  \blocks}{\samp}} \\ 
& \quad  \leq \dims \cdot \sq{ \frac{e^{-\delta}}{(1-\delta)^{1-\delta}}}^{l\cdot \blocks/M} \text{ for } \delta \in [0,1), \text{ and}
\\
\\
& \mathbb{P} \cb{ \lambda_{\max}\br{\Gt} \geq (1+\delta)\frac{l \cdot  \blocks}{\samp}} \\
& \quad \leq \dims \cdot \sq{ \frac{e^{\delta}}{(1+\delta)^{1+\delta}}}^{l\cdot \blocks/M} \text{ for } \delta \geq 0 .
\end{align*}

Noting that $\lambda_{\min}(\Gt) = \sigma_{\dims}(\Gt)^2$, similarly for $\lambda_{\max}$ and using the identity for $\Gt$ above obtains the desired result.
\end{proof}

For ease of reference, we also restate the Matrix Chernoff bound from \cite{Tropp:2010uo,Tropp:2010vm} but defer its proof to the original papers.

\begin{lem}[Matrix Chernoff from \cite{Tropp:2010uo}] \label{lem:chernoff}
Let $\mathcal{X}$ be a finite set of positive-semidefinite matrices with dimension $\dims$, and suppose that
$$
\max_{\At\in\mathcal{X}} \lambda_{\max}(\At) \leq B
$$
Sample $\{ \At_1,\ldots,\At_l \}$ uniformly at random from $\mathcal{X}$ without replacement. Compute 
$$
\mu_{\min} = l \cdot \lambda_{\min}(\bE \Xt_1) \qquad \text{and } \qquad \mu_{\max} = l \cdot \lambda_{\max}(\bE \Xt_1) 
$$
Then
\begin{align*}
& \mathbb{P} \cb{ \lambda_{\min}\br{\sum_i \At_i} \leq (1-\delta)\mu_{\min}} \\
& \quad \leq \dims \cdot \sq{ \frac{e^{-\delta}}{(1-\delta)^{1-\delta}}}^{\mu_{\min}/B} \text{ for } \delta \in [0,1), \text{ and}
\\
\\
& \mathbb{P} \cb{ \lambda_{\max}\br{\sum_i \At_i} \geq (1+\delta)\mu_{\max}} \\
& \quad \leq \dims \cdot \sq{ \frac{e^{\delta}}{(1+\delta)^{1+\delta}}}^{\mu_{\max}/B} \text{ for } \delta \geq 0 .
\end{align*}
\end{lem}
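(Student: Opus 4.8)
This is the sampling-without-replacement form of Tropp's matrix Chernoff inequality, so the plan is to run the matrix Laplace transform machinery of Ahlswede--Winter and Tropp. Write $\Gt = \sum_{i=1}^{l}\At_i$. The first step is a matrix Markov bound: for $\theta > 0$, using that $x\mapsto e^{\theta x}$ is increasing and $\lambda_{\max}(e^{\theta\Gt})\le\mathrm{tr}\,e^{\theta\Gt}$ since $e^{\theta\Gt}\succeq\mathbf{0}$,
\[
\mathbb{P}\cb{\lambda_{\max}(\Gt)\ge t}\;\le\;e^{-\theta t}\,\bE\,\mathrm{tr}\,\exp(\theta\Gt),
\]
and the lower tail is handled symmetrically with $\theta<0$, using $e^{\theta\lambda_{\min}(\Gt)}=\lambda_{\max}(e^{\theta\Gt})$. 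Everything then comes down to bounding the matrix moment generating function $\bE\,\mathrm{tr}\,\exp(\theta\Gt)$ and optimising over $\theta$ at the end.

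The second step removes the without-replacement structure. Since $Y\mapsto\mathrm{tr}\,\exp(\theta Y)$ is convex on Hermitian matrices, the matrix analogue of Hoeffding's comparison inequality for sampling without replacement (as used by Gross--Nesme and by Tropp) lets me bound $\bE\,\mathrm{tr}\,\exp(\theta\Gt)$ by the same expectation with the $\At_i$ drawn i.i.d.\ uniformly on $\mathcal{X}$ \emph{with} replacement. For i.i.d.\ summands I then invoke the subadditivity of the matrix cumulant generating function --- a consequence of Lieb's concavity theorem together with Jensen's inequality --- to reach the master bound $\bE\,\mathrm{tr}\,\exp(\theta\Gt)\le\mathrm{tr}\,\exp\!\br{l\,\log\bE\,e^{\theta\At_1}}$.

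The third step estimates one summand. Because $\mathbf{0}\preceq\At_1\preceq B\,\It$, convexity of $a\mapsto e^{\theta a}$ on $[0,B]$ gives the chord bound $e^{\theta a}\le 1+\frac{e^{\theta B}-1}{B}a$, which transfers to matrices as $\bE\,e^{\theta\At_1}\preceq\It+\frac{e^{\theta B}-1}{B}\bE\At_1\preceq\exp\!\br{\frac{e^{\theta B}-1}{B}\bE\At_1}$, so $\log\bE\,e^{\theta\At_1}\preceq\frac{e^{\theta B}-1}{B}\bE\At_1$ by operator monotonicity of $\log$. Inserting this into the master bound, using monotonicity of $\mathrm{tr}\,\exp(\cdot)$ under $\preceq$ and $\mathrm{tr}\,e^{M}\le\dims\,e^{\lambda_{\max}(M)}$, and writing $\mu_{\max}=l\,\lambda_{\max}(\bE\At_1)$, gives $\bE\,\mathrm{tr}\,\exp(\theta\Gt)\le\dims\,\exp\!\br{\frac{e^{\theta B}-1}{B}\mu_{\max}}$. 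Combining with the Markov step at $t=(1+\delta)\mu_{\max}$ and minimising the exponent $\frac{\mu_{\max}}{B}\br{e^{\theta B}-1-(1+\delta)\theta B}$ over $\theta>0$ --- the minimiser is $\theta=B^{-1}\log(1+\delta)$ --- yields the factor $\sq{e^{\delta}/(1+\delta)^{1+\delta}}^{\mu_{\max}/B}$. Repeating verbatim with $\theta<0$, $\mu_{\min}=l\,\lambda_{\min}(\bE\At_1)$, and minimiser $\theta=B^{-1}\log(1-\delta)$ produces $\sq{e^{-\delta}/(1-\delta)^{1-\delta}}^{\mu_{\min}/B}$ for $\delta\in[0,1)$, which are the two stated bounds.

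The main obstacle is not in the scalar-Chernoff calculus but in the two matrix-analytic ingredients of the second step: the reduction from sampling without to with replacement, and --- above all --- the subadditivity of the matrix cumulant generating function, which rests on Lieb's concavity theorem. These are precisely the results one would cite from Tropp rather than reprove here; once they are available, the remaining manipulations are the usual one-variable Chernoff optimisation lifted to Hermitian matrices via operator monotonicity and matrix convexity.
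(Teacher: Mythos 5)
The paper itself offers no proof of this lemma: it is restated verbatim from Tropp, with the proof explicitly deferred to the original papers. Your sketch is a correct outline of exactly that source's argument --- the matrix Laplace transform step, the Gross--Nesme/Hoeffding reduction from sampling without replacement to i.i.d.\ sampling via convexity of the trace exponential, the Lieb-based subadditivity of the matrix cumulant generating function, the chord bound $e^{\theta a}\le 1+\frac{e^{\theta B}-1}{B}a$ under $\mathbf{0}\preceq\At_1\preceq B\,\It$, and the standard optimisation over $\theta$ --- so it matches the cited proof rather than diverging from anything in this paper, and you are right that the two matrix-analytic ingredients are precisely the ones to cite rather than reprove here.
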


\section{Supplementary Material for Section~\ref{sec:implementation}}\label{sec:supp_exp}

\vspace{-.5cm}
\begin{minipage}{0.55\textwidth}
\begin{algorithm}[H]
\caption{\locod \xspace-- cross validation \label{alg:cv}}
	\algorithmicrequire\; Data: $\Xt$, $\y$, no.\ workers: $\blocks$, no.\ folds: $v$ \\ Parameters: $\dimsks$, $\lambda_1, \ldots \lambda_l$
  \begin{algorithmic}[1]
  \STATE Partition $\{p\}$ into $\blocks$ subsets of equal size $\dimsk$ and distribute feature vectors in $\Xt$ accordingly over $\blocks$ workers.
  \STATE Partition $\{n\}$ into $v$ folds of equal size.
  \FOR{\textbf{each} fold $f$}
  \STATE Communicate indices of training and test points.
    \FOR{\textbf{each} worker $k\in\{1,\ldots\blocks\}$ \textbf{in parallel}}
        \STATE Compute and send $ \Xt_{k,f}^{train} \RP_{k,f}$.
        \STATE Receive random features and construct $\feats_{k,f}^{train}$. 
		\FOR{\textbf{each} $\lambda_j \in \{\lambda_1, \ldots \lambda_l\}$}
        \STATE $\alphatil_{k,f,\lambda_j} \leftarrow \texttt{LocalDualSolver}(\feats_{k,f}^{train},\y^{train}_f,\lambda_j) $
        \STATE $\solnest_{k,f,\lambda_j} = -\frac{1}{\samp \lambda_j} {\Xt_{k,f}^{train}}\tr\alphatil_{k,f,\lambda_j}$
		\STATE $ \hat{\y}_{k,f,\lambda_j}^{test} =  \Xt_{k,f}^{test} \solnest_{k,f,\lambda_j} $   
		\STATE Send  $ \hat{\y}_{k,f,\lambda_j}^{test}$ to driver.
        \ENDFOR
    \ENDFOR
  
	\FOR{\textbf{each} $\lambda_j \in \{\lambda_1, \ldots \lambda_l\}$}
	\STATE Compute $ \hat{\y}_{f,\lambda_j}^{test} = \sum_{k = 1}^\blocks \hat{\y}_{k,f,\lambda_j}^{test}$.
    \STATE Compute $\text{MSE}_{f, \lambda_j}^{test}$ with $\hat{\y}_{f,\lambda_j}^{test} $ and $\y_{f}^{test} $.
	\ENDFOR	
	
  \ENDFOR
  
  \FOR{\textbf{each} $\lambda_j \in \{\lambda_1, \ldots \lambda_l\}$}
  	\STATE Compute $\text{MSE}_{\lambda_j} = \frac{1}{v} \sum_{f = 1}^v \text{MSE}_{f, \lambda_j}$.
  \ENDFOR
	  \end{algorithmic}
  \algorithmicensure\; Parameter $\lambda_j$ attaining smallest $\text{MSE}_{\lambda_j}$
\end{algorithm} 
\end{minipage}

\newcommand{\ra}[1]{\renewcommand{\arraystretch}{#1}}
\ra{1.1}
\begin{table}[!tbp]
\small
\begin{center}
\begin{tabular}{llll}
\hline\hline
\multicolumn{1}{c}{Algorithm}&\multicolumn{1}{c}{K}&\multicolumn{1}{c}{TEST MSE}&\multicolumn{1}{c}{TRAIN MSE}\tabularnewline
\hline
\locod 0.5&12&0.0343 (3.75e-03) &0.0344 (2.59e-03) \tabularnewline
\locod 0.5&24&0.0368 (4.22e-03) &0.0344 (3.05e-03) \tabularnewline
\locod 0.5&48&0.0328 (3.97e-03) &0.0332 (2.91e-03) \tabularnewline
\locod 0.5&96&0.0326 (3.13e-03) &0.0340 (2.67e-03) \tabularnewline
\locod 0.5&192&0.0345 (3.82e-03) &0.0345 (2.69e-03) \tabularnewline
\hline
\locod 1&12&0.0310 (2.89e-03) &0.0295 (2.28e-03) \tabularnewline
\locod 1&24&0.0303 (2.87e-03) &0.0307 (1.44e-03) \tabularnewline
\locod 1&48&0.0328 (1.92e-03) &0.0329 (1.55e-03) \tabularnewline
\locod 1&96&0.0299 (1.07e-03) &0.0299 (7.77e-04) \tabularnewline
\hline
\locod 2&12&0.0291 (2.16e-03) &0.0280 (6.80e-04) \tabularnewline
\locod 2&24&0.0306 (2.38e-03) &0.0279 (1.24e-03) \tabularnewline
\locod 2&48&0.0285 (6.11e-04) &0.0293 (4.77e-04) \tabularnewline
\hline
\cocoap&12&0.0282 (4.25e-18) &0.0246 (2.45e-18) \tabularnewline
\cocoap&24&0.0278 (3.47e-18) &0.0212 (3.00e-18) \tabularnewline
\cocoap&48&0.0246 (6.01e-18) &0.0011 (1.53e-19) \tabularnewline
\cocoap&96&0.0254 (5.49e-18) &0.0137 (1.50e-18) \tabularnewline
\cocoap&192&0.0268 (1.23e-17) &0.0158 (6.21e-18) \tabularnewline
\hline
\end{tabular}\end{center}
\caption{\small Dogs vs Cats data: Normalized training and test MSE: mean and standard deviations (based on 5 repetitions). \label{tab:mse_numbers}}
\end{table}

\end{document}